%
%
%
%
%
\RequirePackage{fix-cm}
\documentclass[smallextended]{svjour3}       
\smartqed  
\usepackage{graphicx}
\usepackage{url}
%
%
%
%
%

\usepackage{xcolor}
\usepackage{comment}
\usepackage{tikz}
\usepackage{amssymb}
\usepackage{amsmath}
\usepackage{mathtools}
\DeclarePairedDelimiter{\ceil}{\lceil}{\rceil}
\usepackage{amsfonts} 
\usepackage{pgfplots}
\usepackage{booktabs}  
\usepackage{nicefrac} 
\usepackage{blindtext}
\usepackage[numbers,sort&compress]{natbib}
\usepackage{multirow}
\pgfplotsset{compat=1.15}
\newcommand{\cri}[1]{\textcolor{black}{#1}}

\begin{document}

\title{Voting with Random Classifiers (VORACE): Theoretical and Experimental Analysis
}

\titlerunning{Voting with Random Classifiers (VORACE)}        

\author{Cristina Cornelio  \and Michele Donini \and Andrea Loreggia \and Maria Silvia Pini \and Francesca Rossi \thanks{This is a preprint of an article published in Autonomous Agents and Multi-Agent Systems journal. The final
authenticated version is available online at: https://doi.org/10.1007/s10458-021-09504-y} \thanks{A. Loreggia has been supported by the H2020 ERC Project “CompuLaw” (G.A.833647)}}


\institute{
Cristina Cornelio \at
              IBM Research, R\"uschlikon, Z\"urich, Switzerland \\
              \email{cor@zurich.ibm.com}      
              \and
	Michele Donini\textsuperscript{*} \at
             Amazon, Berlin, Germany  \\
              \email{donini@amazon.com}  
              \and
	Andrea Loreggia  \at
              European University Institute, Firenze, Italy \\
              \email{andrea.loreggia@gmail.com}  
                    \and
	Maria Silvia Pini \at
              Department of Information Engineering, University of Padova, Italy \\
              \email{pini@dei.unipd.it}  
           \and
	Francesca Rossi	\at
              IBM Research, Yorktown Heights, New York, USA \\
              \email{francesca.rossi2@ibm.com }  
              \and
    \textsuperscript{*}This work was mainly conducted prior joining Amazon. }

\date{Received: \today / Accepted: date}

\maketitle

\begin{abstract}
In many machine learning scenarios, looking for the best classifier that fits a particular dataset can be very costly in terms of time and resources. Moreover, it can require deep knowledge of the specific domain.
We propose a new technique which does not require profound expertise in the domain and avoids the commonly used strategy of hyper-parameter tuning and model selection. Our method is an innovative ensemble technique that uses voting rules over a set of randomly-generated classifiers.
Given a new input sample, we interpret the output of each classifier as a ranking over the set of possible classes.
We then aggregate these output rankings using a voting rule, which treats them as preferences over the classes. We show that our approach obtains good results compared to the state-of-the-art, both providing a theoretical analysis and an empirical evaluation 
of the approach on several datasets. 
\keywords{Multi-agent learning \and Machine learning \and Social choice theory} 
\end{abstract}

\section{Introduction}

It is not easy to identify the best classifier for a certain complex task \cite{Gul2018EnsembleOA,Perikos2016RecognizingEI,DBLP:journals/paa/BarandelaVS03}. 
Different classifiers may be able to exploit better the features of different regions of the domain at hand, and consequently their accuracy might be better only in that region \cite{DBLP:journals/tsmc/KhoshgoftaarHN11,DBLP:conf/mcs/MelvilleSMM04,DBLP:journals/ml/BauerK99}.
Moreover, fine-tuning the classifier's hyper-parameters is a time-consuming task, which also requires a deep knowledge of the domain and a good expertise in tuning various kinds of classifiers.
Indeed, the main approaches to identify the hyper-parameters' best values are either manual or based on grid search, although there are some approaches based on random search \cite{BergstraB12}. 
However, it has been shown that in many scenarios
there is no single learning algorithm that can uniformly outperform the others over all data sets \cite{KotsiantisZP06,rokach2010ensemble,gandhi2015hybrid}. This observation led to an alternative approach to improve the performance of a classifier, which consists of combining several different classifiers (that is, an {\em ensemble} of them) and taking the class proposed by their combination.
Over the years, many researchers have studied methods for constructing good ensembles of classifiers \cite{kittler:icpr96,rokach2010ensemble,KotsiantisZP06,gandhi2015hybrid,DBLP:journals/apin/NetoC18,Dietterich00}, showing that indeed ensemble classifiers are often much more accurate than the individual classifiers within the ensemble \cite{kittler:icpr96}.  
Classifiers combination is widely applied to many different fields, such as urban environment classification \cite{DBLP:journals/aeog/AzadbakhtFK18,DBLP:journals/ijgi/SunLSH17} and medical decision support \cite{DBLP:journals/artmed/SalehBMVRRS18,DBLP:journals/cee/AteeqMAMRLMWBM18}. 
In many cases, the performance of an ensemble method cannot be easily formalized theoretically, but it can be easily evaluated on an experimental basis in specific working conditions (that is, a specific set of classifiers, training data, etc.).

In this paper we propose a new ensemble classifier method, called VORACE, which aggregates randomly generated classifiers using voting rules in order to provide an accurate prediction for a supervised classification task. Besides the good accuracy of the overall classifier, one of the main advantages of using VORACE is that it does not require specific knowledge of the domain or good expertise in fine-tuning the classifiers' parameters.  

We interpret each classifier as a voter, whose vote is its prediction over the classes, and a voting rule aggregates such votes to identify the "winning" class, that is, the overall prediction of the ensemble classifier. 
This use of voting rules is within the framework of maximum likelihood estimators, where each vote (that is, a classifier's rank of all classes) is interpreted as a noisy perturbation of the correct ranking (that is not available), so a voting rule is a way to estimate this correct ranking  \cite{handbooksocialchoice,ConitzerRX09,ConitzerMLE}. 
To the best of our knowledge, this is the first attempt to combine randomly generated classifiers, to be aggregated in an ensemble method, using voting theory to solve a supervised learning task without exploiting any knowledge of the domain. We theoretically and experimentally show that the usage of generic classifiers in an ensemble environment can give results that are comparable with other state-of-the-art ensemble methods. 
Moreover, we provide a closed formula to compute the performance of our ensemble method in the case of Plurality, this corresponds to the probability of choosing the correct class, assuming that all the classifiers are independent and have the same accuracy. We then relax these assumptions by defining the probability of choosing the right class when the classifiers have different accuracies and they are not independent.

Properties of many voting rules have been studied extensively in the literature \cite{handbooksocialchoice,grandi2016borda}. So another advantage of using voting rules is that we can exploit that literature to make sure certain desirable properties of the resulting ensemble classifier hold. Besides the classical properties that the voting theory community has considered (like anonymity, monotonicity, IIA, etc.), there may be also other properties not yet considered, such as various forms of fairness \cite{LoMaRoVe18,rossi2019preferences}, whose study is facilitated by the use of voting rules. 

The paper is organized as follows.
In Section \ref{sec-rw} we briefly describe some prerequisites (a brief introduction to ensemble methods and voting rules) necessary for what follows and an overview of previous works in this research area.
In Section \ref{sec-vorace} we present our approach that exploits voting theory in the ensemble classifier domain using neural networks, decision trees, and support vector machines. 
In Section \ref{sec-exp} we show our experimental results, while in Sections \ref{sec:tech-analysis}, \ref{sec:Condorcet} and \ref{sec:relaxing_assumptions} we discuss our theoretical analysis: in Section \ref{sec:tech-analysis} we present the case in which all the classifiers are independent and with the same accuracy; in Section \ref{sec:Condorcet} we relate our results with the Condorcet Jury Theorem also showing some interesting properties of our formulation (e.g. monotonicity and behaviour with infinite voters/classifiers); and in Section \ref{sec:relaxing_assumptions} we extend the results provided in Section \ref{sec:tech-analysis} relaxing the assumptions of having all the classifiers with the same-accuracy and independent between each other.
Finally, in Section \ref{sec-con} we summarize the results of the paper and we give some hints for future work.

A preliminary version of this work has been published as an extended abstract at the  International Conference On Autonomous Agents and Multi-Agent Systems (AAMAS-20) \cite{VORACE_AAMAS20}.  The code is available open source at \url{https://github.com/aloreggia/vorace/}.

\section{Background and Related Work}\label{sec-rw}
\subsection{Ensemble methods}
Ensemble methods combine multiple classifiers in order to give a substantial improvement in the prediction performance of learning algorithms, especially for datasets which present non-informative features \cite{DBLP:journals/adac/GulPKMMAL18}. 
Simple combinations have been studied from a theoretical point of view, and many different ensemble methods have been proposed \cite{kittler:icpr96}.
Besides simple standard ensemble methods (such as averaging, blending, staking, etc.), {\it Bagging} and {\it Boosting} can be considered two of the main state-of-the-art ensemble techniques in the literature \cite{rokach2010ensemble}. In particular, 
{\bf Bagging} \cite{DBLP:journals/ml/Breiman96b}
trains the same learning algorithm on different subsets of the original training set. These different training subsets are generated by randomly drawing, with replacement, $N$ instances, where $N$ is the original size of training set. Original instances may be repeated or left out. This allows for the construction of several different classifiers where each classifier can have specific knowledge of part of the training set. Aggregating the predictions of the individual classifiers leads to the final overall prediction. Instead, 
{\bf Boosting} \cite{DBLP:journals/jcss/FreundS97} 
keeps track of the learning algorithm performance in order to focus the training attention on instances that have not been correctly learned yet. Instead of choosing training instances at random from a uniform distribution, it chooses them in a manner as to favor the instances for which the classifiers are predicting a wrong class. The final overall prediction is a weighted vote (proportional to the classifiers' training accuracy) of the predictions of the individual classifiers.

While the above are the two main approaches, other variants 
have been proposed, such as Wagging \cite{DBLP:journals/ml/Webb00}, MultiBoosting \cite{DBLP:journals/ml/Webb00}, and Output Coding \cite{DBLP:journals/jair/DietterichB95}.
We compare our work with the state-of-the-art in ensemble classifiers, in particular XGBoost \cite{chen2016xgboost}, which is based on boosting, and  Random Forest (RF) \cite{ho1995random}, which is based on bagging.



\subsection{Voting rules}\label{sec-voting}

For the purpose of this paper, a {\it voting rule} is a procedure that allows a set of voters to collectively choose one among a set of candidates.
Voters submit their vote, that is, their preference ordering over the set of candidates, and the voting rule aggregates such votes to yield a final result (the winner).
In our ensemble classification scenario, the voters are the individual classifiers and the candidates are the classes. A vote is a ranking of all the classes, provided by an individual classifier.
In the classical voting setting, 
given a set of $n$ voters (or agents) $A = \{a_1 , . . . , a_n \}$ and $m$ candidates $C=\{c_1, . . . , c_m\}$, a {\it profile} is a collection of $n$ total orders over the set of candidates, one for each voter. 
So, formally, a voting rule 
is a map from a profile to a winning candidate\footnote{
We assume that there is always a unique winning candidate. In case of ties between candidates, we will use a predefined tie-breaking rule to choose one of them to be the winner.}.
The voting theory literature includes 
many voting rules, with different properties. In this paper, we focus on four of them, but the approach is applicable also to any other voting rules:

\noindent {\bf 1) Plurality:} Each voter states who the preferred candidate is, without providing information about the other less preferred candidates. The winner is the candidate who is preferred by the largest number of voters.

\noindent {\bf 2) Borda:} Given $m$ candidates, each voter gives a ranking
of all candidates. Each candidate receives a score for each voter, based on its position in the ranking: the $i$-th ranked candidate gets the score $m - i$. The candidate with the largest sum of all scores wins.

\noindent {\bf 3) Copeland:} Pairs of candidates are compared in terms of how many voters prefer one or the other one, and the winner of such a pairwise comparison is the one with the largest number of preferences over the other one. The overall winner is the candidate who wins the most pairwise competitions against all the other candidates.


\noindent{\bf 4) Kemeny \cite{kemeny1959mathematics}}:
We borrow a formal definition of the rule from \citet{conitzer2006improved}. For any two candidates $a$ and $b$, given a ranking $r$ and a vote $v$, let $\delta_{a,b}(r,v)=1$ if $r$ and $v$ agree on the relative ranking of $a$ and $b$ (e.g., they either both rank $a$ higher, or both rank $b$ higher), and $0$ if they disagree. Let the agreement of a ranking $r$ with a vote $v$ be given by $\sum_{a,b} \delta_{a,b}(r, v)$, the total number
of pairwise agreements. A Kemeny ranking $r$ maximizes the
sum of the agreements with the votes $\sum_{v}\sum_{a,b} \delta_{a,b}(r, v)$.
This is called a Kemeny consensus. A candidate is a winner of a Kemeny election if it is the top candidate in the Kemeny consensus for that election.

It is easy to see that all the above voting rules associate a score to each candidate (although different voting rules associate different scores), and the candidate with the highest score is declared the winner. Ties can happen when more than one candidate results with the highest score, we arbitrarily break the tie lexicographically in the experiments. We plan to test the model on different and more fair tie-breaking rules.
It is important to notice that when the number of candidates is $m=2$ (that is, we have a binary classification task) all the voting rules have the same outcome since they all collapse to the 
{\bf Majority} rule, which elects the candidate which has a majority, that is, more than half the votes. 

Each of these rules has its advantages and drawbacks. Voting theory provides an axiomatic characterization of voting rules in terms of desirable properties such as anonymity, neutrality, etc. -- for more details on voting rules see \cite{DBLP:series/synthesis/2011Rossi,voting-handbook,handbooksocialchoice}.
In this paper, we do not exploit these properties to choose the "best" voting rule, but rather we rely on what the experimental evaluation tells us about the accuracy of the ensemble classifier. 

\subsection{Voting for ensemble methods}\label{sec:voting_ensemble}

Preliminary techniques from voting theory have already been used to combine individual classifiers in order to improve the performance of some ensemble classifier methods \cite{KotsiantisP05,gandhi2015hybrid,DBLP:journals/ml/BauerK99,donini2018voting}.
Our approach differs from these
methods in the way classifiers are generated and how the outputs of the individual classifiers are aggregated. 
Although in this paper we report results only 
against recent bagging and boosting techniques of ensemble classifiers, we compared our approach with the other existing approaches as well.
More advanced work has been done to study the use of a specific voting rule: the use of {\it majority} to ensemble a profile of classifiers has been investigated in the work of \citet{LamSuen}, where they theoretically analyzed the performance of majority voting (with {\it rejection} if the $50\%$ of consensus is not reached) when the classifiers are assumed independent. 
In the work of \citet{Kuncheva2003}, they provide upper and lower limits on the majority vote accuracy focusing on dependent classifiers.
We perform a similar analysis of the dependence between classifier but in the more complex case of plurality, with also an overview of the general case.
Although majority seems to be easier to evaluate compared to plurality, there have been some attempts to study plurality as well:
\citet{LinYacoub} demonstrated some interesting theoretical results for independent classifiers, and 
\citet{Mu2009} extended their work providing a theoretical analysis of the probability of electing the correct class by an ensemble using plurality, or plurality with rejection, as well as a stochastic analysis of the formula, and evaluating it on a dataset for human recognition. 
However, we have noted an issue with their proof: the authors assume independence between the random variable expressing the total number of votes received by the correct class and the one defining the maximum number of votes among all the wrong classes. This false assumption leads to a wrong final formula (the proof can be found in Appendix~\ref{appendix:Mu2009}). In our work, we provide a formula that exploits generating functions and that fixes the problem of \citet{Mu2009}, based on a different approach. Moreover, we provide proof for the two general cases in which the accuracy of the individual classifiers is not homogeneous, and where classifiers are not independent. Furthermore, our experimental analysis is more comprehensive: not limiting to plurality and considering many datasets of different types.
There are also some approaches that use {\it Borda count} for ensemble methods (see for example the work of \citet{Erp00variantsof}).
Moreover, voting rules have been applied to the specific case of Bagging \cite{Baggingvoting1,Baggingvoting2}. However, in \citet{Baggingvoting1}, the authors combine only classifiers from the same family (i.e., 
Naive Bayes classifier) without mixing them.

A different perspective comes from the work of \citet{de2014essai} and further improvements \cite{youngcondorcet,ConitzerRX09,ConitzerMLE} where the basic assumption is that there always exists a correct ranking of the alternatives, but this cannot be observed directly. Voters derive their preferences over the alternatives from this ranking (perturbing it with noise). Scoring voting rules are proved to be maximum likelihood estimators (MLE). Under this approach, one computes the likelihood of the given preference profile for each possible state of the world, that is, the true ranking of the alternatives and the best ranking of the alternatives are then the ones that have the highest likelihood of producing the given profile. This model aligns very well with our proposal and justifies the use of voting rules in the aggregation of classifiers' prediction. Moreover, MLEs give also a justification to the performance of ensembles where voting rules are used.


\section{VORACE}
\label{sec-vorace}

The main idea of VORACE (VOting with RAndom ClassifiErs) is that, given a sample, the output of each classifier can be seen as a ranking over the available classes, where the ranking order is given by the classifier's expected probability that the sample belongs to a class. Then a voting rule is used to aggregate these orders and declare a class as the "winner".
VORACE generates a profile of $n$ classifiers (where $n$ is an input parameter) by randomly choosing the type of each classifier among a set of predefined ones. For instance, the classifier type can be drawn between a decision tree or a neural network. For each classifier, some of its hyper-parameters values are chosen at random, where the choice of which hyper-parameters and which values are randomly chosen depends on the type of the classifier. When all classifiers are generated, they are trained using the same set of training samples.
For each classifier, the output is a vector with as many elements as the classes, where the $i$-th element represents the probability that the classifier assigns the input sample to the $i$-th class. Output values are ordered from the highest to the smallest one, and the output of each classifier is interpreted as a ranking over the classes, where the class with the highest value is the first in the ranking, then we have the class that has the second highest value in the output of the classifier, and so on. These rankings are then aggregated using a voting rule. The winner of the election is the class with the higher score. This corresponds to the prediction of VORACE.
Ties can occur when more than one class gets the same score from the voting rule. In these cases, the winner is elected using a tie-breaking rule, which chooses the candidate that is most preferred by the classifier with the highest validation accuracy in the profile.

\begin{example}
Let us consider a profile composed by the output vectors of three classifiers, say $y_1$, $y_2$ and $y_3$, over four candidates (classes) $c_1$, $c_2$, $c_3$ and $c_4$: $y_1 = [0.4,0.2,0.1,0.3]$, $y_2 = [0.1,0.3,0.2,0.4]$, and $y_3 = [0.4,0.2,0.1,0.3]$.
For instance, $y_1$ represents the prediction of the first classifier, which could predict that the input sample belongs to the first class with probability $0.4$, to the second class with probability $0.2$, to the third class with probability $0.1$ and to the fourth class with probability $0.3$.
From the previous predictions we can derive the correspondent ranked orders $x_1$, $x_2$ and $x_3$. For instance, from prediction $y_1$ we can see that the first classifier prefers $c_1$, then $c_4$, then $c_2$ and then $c_3$ is the less preferred class for the input sample. Thus we have:
$
x_1 = 
  \begin{bmatrix}
       c_1,           
       c_4,
       c_2,
       c_3 
  \end{bmatrix} 
\text{, }
x_2 = 
   \begin{bmatrix}
       c_4,
       c_2,
       c_3,
       c_1
  \end{bmatrix} 
\text{ and } 
x_3 =
  \begin{bmatrix}
       c_1,
       c_4,
       c_2,
       c_3
  \end{bmatrix} 
$.
Using Borda, class $c_1$ gets 6 points, $c_2$ gets 4 points, $c_3$ gets 1 point and $c_4$ gets 7 points. Therefore, $c_4$ is the winner, i.e. VORACE outputs $c_4$ as the predicted class. 
On the other hand, if we used Plurality, the winning class would be $c_1$, since it is preferred by 2 out of 3 voters.
\end{example}

Notice that this method does not need any knowledge of architecture, type, or parameters, of the individual classifiers.\footnote{Code available at \url{https://github.com/aloreggia/vorace/}.}

\section{Experimental Results}
\label{sec-exp}
\begin{table}[t!]
\centering
\resizebox{\linewidth}{!}{
\begin{tabular}{lccccc}  
\toprule
Dataset	& \#Examples	& \#Categorical  & \#Numerical  & Missing 	& \#Classes\\
\midrule
anneal			& 898 	 & 32 &	6 	&yes &6\\
autos			& 205 	 & 10 &	15 	&yes &7 \\
balance-s  		& 625 	 & 0  &	4	& no &3\\
breast-cancer	& 286 	& 9	&	0	&yes	&	2\\
breast-w		& 699 	& 0	&	9	&yes	& 	2\\
cars            & 1728  & 6 & 0     &no     &   4\\
credit-a		& 690 	& 9	&	6	&yes	&	2\\
colic			& 368 	& 15&	7	&yes	&	2\\
dermatology		& 366 	 & 33 &	1 	&yes	 &6\\
glass	 		& 214 	 & 0  &	9 	&no  &5\\
haberman		& 306 	& 0	&	3	&no		&	2\\
heart-statlog	& 270 	& 0	&	13	&no		&	2 \\
hepatitis		& 155 	& 13&	6	&yes	&	2\\
ionosphere		& 351 	& 34&	0	&no		&	2\\
iris			& 150 	 & 0  &	4 	&no  &3\\
kr-vs-kp        & 3196  & 0  & 36   &no &2\\
letter		 	& 20,000 & 0  &	16 	&no  &26\\
lymphogra 		& 148 	 & 15 &	3 	&no  &4\\
monks-3 		& 122 	& 6	&	0	&no		&	2 \\
spambase		& 4,601	& 0 &	57 	&no 	&	2\\
vowel			& 990 	 & 3  &	10 	&no  &11\\
wine 			& 178 	& 0	&	13	&no		&	3\\
zoo				& 101 	 & 16 &	1 	&no  &7\\
\bottomrule
\end{tabular}}
\caption{Description of the datasets.}
\label{tab:dataset-desc}
\end{table}
\begin{table*}[ht]
\footnotesize
\centering
\resizebox{\linewidth}{!}{
\begin{tabular}{lccccccc}  
\toprule
\# Voters 	 & Avg Profile	 & Borda  	 & Plurality	 &  Copeland 	 & Kemeny 	 & Sum 	 & Best C.  \\ 
\midrule
5   	 & 0.8599 (0.1021) 	 & 0.8864 (0.1043) 	 & 0.8885 (0.1052) 	 & 0.8885 (0.1051) 	 & \textbf{0.8886 (0.1050)} 	 & 0.8864 (0.1116) 	 & 0.8720 (0.1199) \\ 
7   	 & 0.8652 (0.0990) 	 & 0.8942 (0.0995) 	 & \textbf{0.8966 (0.1005)} 	 & 0.8965 (0.1007) 	 & \textbf{0.8966 (0.1007)} 	 & 0.8942 (0.1052) 	 & 0.8689 (0.1168) \\ 
10   	 & 0.8626 (0.0988) 	 & 0.8990 (0.0979) 	 & 0.9007 (0.0998) 	 & 0.9004 (0.1001) 	 & \textbf{0.9008 (0.1007)} 	 & 0.8985 (0.1050) 	 & 0.8667 (0.1196) \\ 
20   	 & 0.8615 (0.0965) 	 & 0.9015 (0.0968) 	 & \textbf{0.9043 (0.0977)} 	 & 0.9036 (0.0981) 	 & 0.9033 (0.0987) 	 & 0.8992 (0.1065) 	 & 0.8655 (0.1203) \\ 
40   	 & 0.8630 (0.0960) 	 & 0.9044 (0.0958) 	 & \textbf{0.9066 (0.0967)} 	 & 0.9060 (0.0968) 	 & 0.9058 (0.0969) 	 & 0.9006 (0.1050) 	 & 0.8651 (0.1183) \\ 
50   	 & 0.8633 (0.0957) 	 & 0.9044 (0.0962) 	 & \textbf{0.9068 (0.0970)} 	 & 0.9060 (0.0970) 	 & 0.9062 (0.0972) 	 & 0.8995 (0.1076) 	 & 0.8655 (0.1204) \\ 
\hline
Avg 	 & 0.8626 (0.0981) 	 & 0.8983 (0.0987) 	 &  \textbf{0.9006 (0.0998)} 	 & 0.9002 (0.0998) 	 & 0.9002 (0.1001) 	 & 0.8964 (0.1070) 	 & 0.8673 (0.1192)  \\

\bottomrule
\end{tabular}
}
\caption{Average F1-scores (and standard deviation), varying the number of voters, averaged over all datasets.}
\label{tab:avg-perf}
\end{table*}
\begin{table}[ht]
\centering
\footnotesize
\resizebox{\linewidth}{!}{
\begin{tabular}{lcccc}  
\toprule
Dataset 	 &   Majority 	 & Sum 	 & RF  	 &  XGBoost \\ \midrule
breast-cancer 	 & \textbf{0.7356 (0.0947)} 	 
& 0.7151 (0.0983) 	 &  0.7134 (0.0397) 	 & 0.7000 (0.0572) \\
breast-w 	 & 0.9645 (0.0133) 	 
& 0.9610 (0.0168) 	 &  \textbf{0.9714 (0.0143)} 	 & 0.9613 (0.0113) \\
colic 	 & 0.8587 (0.0367) 	 
& 0.8573 (0.0514) 	 &  0.8507 (0.0486) 	 & \textbf{0.8750 (0.0534)} \\
credit-a 	 & 0.8590 (0.0613) 	 
& 0.8478 (0.0635) 	 &  \textbf{0.8710 (0.0483)} 	 & 0.8565 (0.0763) \\
haberman 	 & 0.7337 (0.0551) 	 
& 0.6994 (0.0765) 	 &  \textbf{0.7353 (0.0473)} 	 & 0.7158 (0.0518) \\
heart-statlog 	 & 0.8070 (0.0699) 	 
& 0.7885 (0.0797) 	 &  \textbf{0.8259 (0.0621)} 	 & 0.8222 (0.0679) \\
hepatitis 	 & 0.8385 (0.0903) 	 
& 0.8377 (0.0955) 	 &  \textbf{0.8446 (0.0610)} 	 & 0.8242 (0.0902) \\
ionosphere 	 & \textbf{0.9435 (0.0348)} 	 
& 0.9366 (0.0344) 	 &  0.9344 (0.0385) 	 & 0.9260 (0.0427) \\
kr-vs-kp 	 & 0.9958 (0.0044) 	 
& \textbf{0.9960 (0.0044)} 	 &  0.9430 (0.0139) 	 & 0.9562 (0.0174) \\
monks-3 	 & 0.9182 (0.0712) 	 
& 0.9115 (0.0748) 	 &  \textbf{0.9333 (0.0624)} 	 & \textbf{0.9333 (0.0624)} \\
spambase 	 & \textbf{0.9416 (0.0105)} 	 
& 0.8801 (0.1286) 	 &  0.9100 (0.0137) 	 & 0.9294 (0.0112) \\
\midrule
Average & 0.8724 (0.0493) & 0.8574 (0.0658)	 & 0.8666 (0.0409)	& 0.8636 (0.0493)\\
\bottomrule
\end{tabular}
}
\caption{Performances on binary datasets: Average F1-scores (and standard deviation). Best performance in bold. On binary datasets, all the voting rules behave as majority voting rule. }
\label{tab:mix-results-bin}
\end{table}
\begin{table*}[ht]
\centering
\footnotesize
\resizebox{\linewidth}{!}{
\begin{tabular}{lccccccc}  
\toprule
Dataset 	 &   Borda 	 & Plurality 	 &  Copeland  	 & Kemeny	 & Sum 	 & RF  	 &  XGBoost \\ \midrule
anneal 	 & \textbf{0.9917 (0.0138)} 	 &  0.9876 (0.0200) 	 & 0.9876 (0.0200) 	 & 0.9880 (0.0194) 	 & 0.9894 (0.0174) 	 &  0.8471 (0.0122) 	 & 0.9912 (0.0080) \\
autos 	 & 0.8021 (0.0669) 	 &  0.7848 (0.0794) 	 & 0.7803 (0.0768) 	 & 0.7832 (0.0771) 	 & 0.8095 (0.0749) 	 &  0.6890 (0.0743) 	 & \textbf{0.8298 (0.0744)} \\
balance 	 & 0.9016 (0.0366) 	 &  \textbf{0.9208 (0.0311)}	 & 0.9069 (0.0292) 	 & 0.9082 (0.0297) 	 & 0.8911 (0.0376) 	 &  0.8561 (0.0540) 	 & 0.8578 (0.0441) \\
cars 	 & 0.9916 (0.0079) 	 &  0.9932 (0.0054) 	 & 0.9931 (0.0054) 	 & \textbf{0.9934 (0.0053)} 	 & 0.9931 (0.0048) 	 &  0.7928 (0.0300) 	 & 0.8935 (0.0266) \\
dermatology 	 & \textbf{0.9819 (0.0192)} 	 &  0.9769 (0.0206) 	 & 0.9769 (0.0206) 	 & 0.9766 (0.0209) 	 & 0.9783 (0.0196) 	 &  0.9699 (0.0256) 	 & 0.9755 (0.0189) \\
glass 	 & 0.9708 (0.0364) 	 &  0.9602 (0.0319) 	 & 0.9607 (0.0291) 	 & 0.9611 (0.0287) 	 & \textbf{0.9742 (0.0268)} 	 &  0.9535 (0.0295) 	 & 0.9719 (0.0313) \\
iris 	 & 0.9473 (0.0576) 	 &  0.9473 (0.0576) 	 & 0.9473 (0.0576) 	 & 0.9480 (0.0570) 	 & 0.9527 (0.0519) 	 &  0.9533 (0.0521) 	 & \textbf{0.9600 (0.0442)} \\
letter & 0.9311 (0.01)&  0.9590 (0.01)& 0.9545 (0.01) &- & 0.9627 (0.01) & 0.6044 (0.01) & 0.8832 (0.01) \\
lymphography 	 & 0.8461 (0.0983) 	 &  \textbf{0.8630 (0.0851)} 	 & 0.8624 (0.0843) 	 & 0.8604 (0.0875) 	 & 0.8529 (0.0925) 	 &  0.8586 (0.0691) 	 & 0.8519 (0.0490) \\
vowel 	 & 0.9476 (0.0232) 	 &  \textbf{0.9862 (0.0110)} 	 & 0.9860 (0.0116) 	 & 0.9860 (0.0114) 	 & \textbf{0.9862 (0.0119)} 	 &  0.7333 (0.0323) 	 & 0.8323 (0.0333) \\
wine 	 & 0.9656 (0.0537) 	 &  0.9789 (0.0331) 	 & 0.9783 (0.0342) 	 & 0.9783 (0.0342) 	 & 0.9806 (0.0380) 	 &  \textbf{0.9889 (0.0222)} 	 & 0.9611 (0.0558) \\
zoo 	 & 0.9550 (0.0497) 	 &  0.9550 (0.0517) 	 & 0.9560 (0.0496) 	 & 0.9590 (0.0492) 	 & 0.9500 (0.0640) 	 &  0.9500 (0.0500) 	 & \textbf{0.9700 (0.0640)} \\

\midrule
Average & 0.9365 (0.0421)	& 0.9413 (0.0388)   & 0.9396 (0.0380)	& 0.9402 (0.0382)	& 0.9416 (0.0399)	&0.8720 (0.0410)	&0.9177 (0.0409)\\
\bottomrule
\end{tabular}
}
\caption{Performances on multiclass datasets: Average F1-scores (and standard deviation). Best performance in bold. }
\label{tab:mix-results-mul}
\end{table*}
We considered 23 datasets from the UCI \cite{Newman+Hettich+Blake+Merz:1998} repository. Table \ref{tab:dataset-desc} gives a brief description of these datasets in terms of number of examples, number of features (where some features are categorical and others are numerical), whether there are missing values for some features, and number of classes.
To generate the individual classifiers, we use three 
classification algorithms: Decision Trees (DT), Neural Networks (NN), and Support Vector Machines (SVM). 

{\bf Neural networks} are generated by choosing $2$, $3$ or $4$ hidden layers with equal probability. 
For each hidden layer, the number of nodes is sampled geometrically in the range $[A,B]$, which means computing $\lfloor(e^x)\rfloor$ where $x$ is drawn uniformly in the interval $[\log(A),\log(B)]$ \cite{BergstraB12}. We choose $A=16$ and $B=128$.
The activation function is chosen with equal probability between the rectifier function $f(x)=max(0,x)$ and the hyperbolic tangent function.
The maximum number of epochs to train each neural network is set to $100$. An early stopping callback is used to prevent the training phase to continue even when the accuracy is not improving and we set the patience parameter to $p=5$. 
Batch size value is adjusted to respect the size of the dataset: given a training set $T$ with size $l$, 
the batch size is set to $b=2^{\ceil{log_2(x)}}$ where $x=\frac{l}{100}$.

{\bf Decision trees} are generated by choosing between the \textit{entropy} and \textit{gini} criteria with equal probability, and with a maximal depth uniformly sampled in $[5,25]$.

{\bf SVMs} are generated by choosing randomly between the \textit{rbf} and \textit{poly} kernels. For both types, the $C$ factor is drawn geometrically in $[2^{-5},2^{5}]$. If the type of the kernel is poly, the coefficient is sampled at random in $[3,5]$. For rbf kernel, the gamma parameter is set to auto. 

We used the average F1-score of a classifier ensemble as the evaluation metric, for all 23 different data sets, since the F1-score is a better measure to use if we need to seek a balance between Precision and Recall.
For each dataset, we train and test the ensemble method with a 10-fold cross validation process. Additionally, for each dataset, experiments are performed 10 times, leading to a total of 100 runs for each method over each dataset. This is done to ensure greater stability. The voting rules considered in the experiments are Plurality, Borda, Copeland and Kemeny.

\cri{In order to compute the Kemeny consensus, we leverage the implementation of the Kemeny method for rank aggregation of incomplete rankings with ties that is available with the Python package named corankco\footnote{The package is available at \texttt{https://pypi.org/project/corankco/}.}. The package provides several methods for computing a Kemeny consensus. Finding a Kemeny consensus is computationally hard, especially when the number of candidates grows. In order to ensure the feasibility of the experiments, we compute a Kemeny consensus using the exact algorithm with ILP Cplex when the number of classes $|C|\leq 5$, otherwise we employed the consensus computation with a heuristic (see package documentation for further details).}
We compare the performance of VORACE to 1) the average performance of a profile of individual classifiers, 2) the performance of the best classifier in the profile, 3) two state-of-the-art methods (Random Forest and XGBoost), and 4) the {\it Sum} method (also called {\it weighted averaging}). The {\it Sum} method  computes $x_{j}^{\text{Sum}}=\sum_i^n x_{j,i}$ for each individual classifier $i$ and for each class $j$, where $x_{j,i}$ is the probability that the sample belongs to class $j$ predicted by classifier $i$. The winner is 
the one with the maximum value in the sum vector: $\arg\max x_{j}^{\text{Sum}}$. 
We did not compare VORACE to a more sophisticated version of Sum, such as {\it conditional averaging}, since they are not applicable in our case, requiring additional knowledge of the domain which is out of the scope of our work.
Both Random Forest and XGBoost classifiers are generated with the same number of trees as the number of classifiers in the profile, all the remaining parameters are generated using default values.
We did not compare to {\it stacking} because it would require to manually identify the correct structure of the sequence of classifiers in order to obtain competitive results. An optimal structure (i.e., a definition of a second level meta-classifier) can be defined by an expert in the domain at hand \cite{breiman1996stacked}, and this is out of the scope of our work.

To study the accuracy of our method, we performed three kinds of experiments: 1) varying the number of individual classifiers in the profile and averaging the performance over all datasets, 2) fixing the number of individual classifiers and analyzing the performance on each dataset and 3) considering the introduction of more complex classifiers as base classifiers for VORACE. Since the first experiment shows that the best accuracy of the ensemble occurs when $n=50$, we use only this size for the second and third experiments.

\subsection{Experiment 1: Varying the number of voters in the ensemble}

The aim of the first experiment is twofold: on one hand, we want to show that increasing the number of classifiers in the profile leads to an improvement of the performance. On the other hand, we want to show the effect of the aggregation on performance, compared with the best classifier in the profile and with the average classifier's performance. To do that, we first evaluate the overall average accuracy of VORACE varying the number $n$ of individual classifiers in the profile. Table \ref{tab:avg-perf} presents the performance of each ensemble for different numbers of classifiers, specifically $n \in \{5,7,10,20,40,50\}$. Plurality, Copeland, and Kemeny voting rules have their best accuracy for VORACE  when $n=50$. We set the system to stop the experiment after the time limit of one week, this is why we stop when $n=50$. We are planning to run experiments with larger time limits in order to understand whether the system shows that the effect of the profile's size diminishes at some point. 
In Table \ref{tab:avg-perf}, we report the F1-score and the standard deviation of VORACE with the considered voting rules. The last line of the table presents the average F1-score for each voting rule. The dataset ‘‘letter" was not considered in this test.

Increasing the number of classifiers in the ensemble, all the considered voting rules show an increase of the performance, specifically the higher the number of the classifiers the higher the F1-score of VORACE.

\cri{However, in Table~\ref{tab:avg-perf} we can observe that the performance is slightly incremental when we increase the number of classifiers. This is due to the fact that in this particular experiment the accuracy of every single classifier is usually very high (i.e., $p \geq 0.8$), thus the ensemble has a reduced contribution to the aggregated result. In general this is not the case, especially when we have to deal with ``harder'' datasets where the accuracy $p$ of single classifiers is lower.
In Section \ref{sec:tech-analysis}, we will explore this case and we will see that 
the number of classifiers has a greater impact on the accuracy of the ensemble when the accuracy of the classifiers on average is low (e.g., $p \leq 0.6$).
}

\cri{Moreover, it is worth noting that the computational cost of the ensemble (both training and testing) increases linearly with the number of classifiers in the profile. Thus, it is convenient to consider more classifiers, especially when the accuracy of the single classifiers is poor.}
Thus, overall, the increase in the number of classifiers has a positive effect on the performance of VORACE, as expected given the theoretical analysis in Section \ref{sec:tech-analysis}\footnote{However, the experiments do not satisfy the independence assumption of the theoretical study}.

For each voting rule, we also compared VORACE 
to the average performance of the individual classifiers and the best classifier in the profile, to understand if VORACE is better than 
the best classifier, or if it is just better than the average classifiers' accuracy (around $0.86$). 
In Table \ref{tab:avg-perf} we can see that VORACE always behaves better than both the best classifier and the profile's average. Moreover, it is interesting to observe that Plurality performs better on average than more complex voting rules like Borda and Copeland. 


\subsection{Experiment 2: Comparing with existing methods}


For the second experiment, we set $n=50$ and we compare VORACE (using Majority, Borda, Plurality, Copeland, and Kemeny) with Sum, Random Forest (RF), and XGBoost in each dataset. Table \ref{tab:mix-results-bin} reports the performances of VORACE on binary datasets where all the voting rules collapse to Majority voting. VORACE performances are very close to the state-of-the-art. We try to use Kemeny on the dataset ‘‘letter" but it exceeds the time limit of one week and thus it was not possible to compute the average. In order to make the average values comparable (last row of Table \ref{tab:mix-results-mul}),  performances on the dataset ‘‘letter" were not considered in the computation of the average values for the other methods.
Table \ref{tab:mix-results-mul} reports the performances on datasets that have multiple classes: when the number of classes increases VORACE is still stable and behaves very similarly to the state-of-the-art. 
The similarity among the performances is promising for the system. Indeed, RandomForest and XGBoost reach better performances on some datasets and they can be improved on over by optimizing their hyperparameters. But, this experiment shows that it is possible to reach very similar performances using a very simple method as VORACE is. This means that usage of VORACE does not require any optimization of hyperparameters whether it is done manually or automatically.
The importance of this property is corroborated by a recent line of work by  \cite{StrubellGM19} that suggests how industry and academia should focus their efforts on developing tools that reduce or avoid hyperparameters' optimization, resulting in simpler methods that are also more sustainable in terms of energy and time consumption.

Moreover, Plurality is both more time and space efficient since it needs a smaller amount of information: for each classifier it just needs the most preferred candidate instead of the whole ranking, contrarily to other methods such as Sum.
We also performed two additional variants of these experiments, one with a weighted version of the voting rules (where the weights are the classifiers' validation accuracy), and the other one by training each classifier on different portions of the data in order to increase the independence between them. In both experiments, the results are very similar to the ones reported here. 

\subsection{Experiment 3: Introducing complex
classifiers in the profile}
The goal of the third experiment is to understand whether using complex classifiers in the profile (such as using an ensemble of ensembles) would produce better final performances.
For this purpose, we compared VORACE with standard base classifiers (described in Section \ref{sec-vorace}) with three different versions of VORACE with complex base classifiers: 1) VORACE with only Random Forest 2) VORACE with only XGBoost and 3) VORACE with Random Forest, XGBoost and standard base classifiers (DT, SVM, NN). 

For simplicity, we used the Plurality voting rule, since it is the most efficient method and it is one of the voting rules that gives better results.
We fixed the number of voters in the profiles to $50$ and we selected the parameters for the simple classifiers for VORACE as described at the beginning of Section \ref{sec-exp}. For Random Forest, parameters were drawn uniformly among the following values\footnote{Parameters' names and values refer to the Python's modules: \texttt{RandomForestClassifier} in \texttt{sklearn.ensemble} and \texttt{xgb} in \texttt{xgboost}.}:
{\it bootstrap} between {\it True} and {\it False}, 
{\it max\_depth} in $[10, 20, \dots, 100, None]$,
{\it max\_features} between $[auto, sqrt]$, 
{\it min\_samples\_leaf} in $[1, 2, 4]$, 
{\it min\_samples\_split} in $[2, 5, 10]$, and 
{\it n\_estimators} in $[10,20,50,100,200]$.  
For XGBoost instead the parameters were drawn uniformly among the following values:
{\it max\_depth} in the range $[3,25]$, 
{\it n\_estimators} equals the number of classifiers, 
{\it subsample} in $[0,1]$, and 
{\it colsample\_bytree} in $[0,1]$. 
The results of the comparison between the different versions of VORACE are provided in Table \ref{tab:mix-results2}. We can observe that the performance of VORACE (column "Majority" of Table \ref{tab:mix-results-bin} and column "Plurality" of Table \ref{tab:mix-results-mul}) is not significantly improved by using more complex classifiers as a base for the profile. 
It is interesting to notice the effect of VORACE on the aggregation of RF with respect to a single RF. Comparing the results in Table \ref{tab:mix-results-bin} and \ref{tab:mix-results-mul} (RF column) with results in Table \ref{tab:mix-results2} (VORACE with only RF column), one can notice that RF is positively affected by the aggregation on many datasets (on all the datasets the improvement is on average $5$\%), especially on those with multiple classes. Moreover, the improvement is significant in many of them: i.e. on ``letter'' dataset we have an improvement of more than $35$\%.
This effect can be explained by the random aggregation of trees used by the RF algorithm, where the goal is to reduce the variance of the single classifier. In this sense, a principled aggregation of different RF models (as the one in VORACE) is a correct way to boost the final performance: distinct RF models act differently over separate parts of the domain, providing VORACE with a good set of weak classifiers -- see Theorem \ref{th:overlap}.

We saw in this section that this more complex version of VORACE does not provide any significant advantage, in terms of performance, compared with the standard one. To conclude, we thus suggest using VORACE in its standard version without adding complexity to the base classifiers.

\begin{table}[!h]
\centering

\begin{tabular}{lccc}  
\toprule
\multirow{2}{*}{dataset}
& VORACE with	 &  VORACE with &  VORACE with	\\
& RF \& XGBoost 	 &   only RF &   only XGBoost 	\\
\midrule
anneal 	 	 &  0.9937 (0.01) 	  &  0.9921 (0.01) 	 & 0.9893 (0.01) 	\\
autos 	 	 &  0.8095 (0.09) 	  &  0.7969 (0.10) 	 & 0.7916 (0.08) 	 \\
balance 	 	 &  0.8998 (0.02) 	  &  0.8456 (0.03) 	 & 0.8040 (0.04)  \\
breast-cancer* 	 	 &  0.7573 (0.04) 	  &  0.7485 (0.06) 	 & 0.7394 (0.06)  \\ 	
breast-w* 	 	 &  0.9654 (0.02) 	  &  0.9744 (0.02) 	 & 0.9605 (0.03)  \\
cars 	 	 &  0.9887 (0.01) 	  &  0.9547 (0.01) 	 & 0.9044 (0.05) 	\\ 	
colic*	 	 &  0.8668 (0.04) 	  &  0.8766 (0.04) 	 & 0.8638 (0.04) 	 \\ 	
credit-a* 	 	 &  0.8737 (0.03) 	  &  0.8691 (0.03) 	 & 0.8712 (0.03)  \\
dermatology 	 	 &  0.9749 (0.02) 	  &  0.9765 (0.02) 	 & 0.9805 (0.02)  \\ 	
glass 	 	 &  0.9761 (0.03) 	  &  0.9740 (0.04) 	 & 0.9770 (0.03) 	 \\
haberman* 	 	 &  0.7338 (0.04) 	  &  0.7168 (0.04) 	 & 0.7286 (0.02)  \\
heart-statlog* 	 	 &  0.8315 (0.09) 	  &  0.8352 (0.09) 	 & 0.8248 (0.08)  \\ 	
hepatitis*	 	 &  0.8215 (0.07) 	  &  0.8091 (0.05) 	 & 0.8105 (0.08)  \\ 	
ionosphere* 	 	 &  0.9349 (0.04) 	  &  0.9272 (0.05) 	 & 0.9347 (0.04)  \\
iris 	 	 &  0.9627 (0.05) 	  &  0.9593 (0.04) 	 & 0.9593 (0.05) 	\\
kr-vs-kp* 	 	 &  0.9953 (0.00) 	  &  0.9869 (0.01) 	 & 0.9892 (0.01)  \\
letter 	 	 &  0.9632 (0.01) 	  &  0.9622 (0.01) 	 & 0.9265 (0.01) 	 \\
lymphography 	 	 &  0.8700 (0.10) 	  &  0.8306 (0.15) 	 & 0.8412 (0.14)  \\ 	
monks-3* 	 	 &  0.9156 (0.07) 	  &  0.9340 (0.06) 	 & 0.9037 (0.07)  \\ 	
spambase* 	 	 &  0.9437 (0.01) 	  &  0.9439 (0.01) 	 & 0.9337 (0.01)  \\ 	
vowel 	 	 &  0.9834 (0.01) 	  &  0.9691 (0.02) 	 & 0.9086 (0.03) 	 \\ 	
wine 	 	 &  0.9851 (0.03) 	  &  0.9764 (0.04) 	 & 0.9796 (0.04) 	 \\ 	
zoo 	 	 &  0.9535 (0.05) 	  &  0.9589 (0.05) 	 & 0.9231 (0.06) 	 \\
\midrule
Average & {\bf 0.9130 (0.04)} & 0.9051 (0.04) & 0.8933 (0.04)\\
\bottomrule
\end{tabular}
\caption{Average F1-scores (and standard deviation). * denotes binary datasets.}
\label{tab:mix-results2}
\end{table}

In other experiments, we also see that the probability of choosing the correct class decreases if the number of classes increases. This means that the task becomes more difficult with a larger number of classes.

\section{Theoretical analysis: Independent classifiers with the same accuracy}
\label{sec:tech-analysis}

In this section we theoretically analyze the probability to predict the correct label/class of our ensemble method.

Initially, we consider a simple scenario with $m$ classes (the candidates) and a profile of $n$ independent classifiers (the voters), where each classifier has the same probability $p$ of correctly classifying a given instance. The independence assumption hardly fully holds in practice, but it is a natural simplification (commonly adopted in literature) used for the sake of analysis.

We assume that the system uses the Plurality voting rule. This is justified by the fact that Plurality provides better results in our experimental analysis (see Section \ref{sec-exp}) and thus it is the one we suggest to use with VORACE. Moreover, Plurality has also the advantage to require very little information from the individual classifiers and also being computationally efficient. 

We are interested in computing the probability that VORACE chooses the correct class. 
This probability corresponds to the accuracy of VORACE when considering the single classifiers as black boxes, i.e., knowing only their accuracy without any other information. The result presented in the following theorem is especially powerful because it shows a closed formula that only requires for the values of $p$, $m$, and $n$ to be known.
\begin{theorem}\label{prop:1}
The probability of electing the correct class $c^*$, among $m$ classes, with a profile of $n$ classifiers, each one with accuracy $p \in [0,1]$ , using Plurality is given by:
\begin{equation}\label{eq:prob_exact}
\mathcal{T}(p) = \frac{1}{K}(1-p)^n \sum_{i=\lceil \frac{n}{m} \rceil}^{n} \varphi_i (n-i)!  \binom{n}{i} \left( \frac{p}{1-p} \right) ^i
\end{equation}
where $\varphi_i$ is defined as the coefficient of the monomial $x^{n-i}$ in the expansion of the following generating function:
$$
\mathcal{G}^m_i(x)=\left( \sum_{j=0}^{i-1} \frac{x^j}{j!} \right) ^{m-1}
$$
and $K$ is a normalization constant defined as:
$$
K=\sum_{j=0}^{n} \binom{n}{j} p^j(m-1)^{n-j} (1-p)^{n-j} ~.
$$
\end{theorem}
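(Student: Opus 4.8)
The plan is to reduce the theorem to a pure counting problem and then evaluate the relevant count with an exponential generating function. Since Plurality depends only on each classifier's top choice, I first model the profile as an assignment of each of the $n$ classifiers to one of the $m$ classes, giving the correct class $c^*$ relative weight $p$ and each of the $m-1$ incorrect classes relative weight $1-p$; normalizing over all such assignments produces the partition function $\sum_{\text{assignments}}(\text{weight})$, which factorizes over classifiers into $(p+(m-1)(1-p))^n$. First I would check that this equals the stated $K$: expanding $K=\sum_{j=0}^n \binom nj p^j (m-1)^{n-j}(1-p)^{n-j}$ by the binomial theorem with the two summands $p$ and $(m-1)(1-p)$ gives exactly $(p+(m-1)(1-p))^n$, so $K$ is precisely the normalizer. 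Under this model the unnormalized weight of any profile in which $c^*$ receives $i$ votes is $p^i(1-p)^{n-i}$, independent of how the remaining votes are split.

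Next I would fix the tie-breaking convention matching the formula and condition on $i$. Under Plurality, $c^*$ is elected exactly when it strictly beats every other class, i.e. every incorrect class receives at most $i-1$ votes when $c^*$ receives $i$ votes (so a configuration in which $c^*$ merely ties is not counted as a win, which is the convention the stated summand encodes). Choosing which $i$ classifiers vote for $c^*$ contributes $\binom ni$, and the remaining $n-i$ labelled classifiers must be distributed among the $m-1$ incorrect classes with each class receiving between $0$ and $i-1$ of them.

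The crux is to count these capacity-bounded labelled distributions, and this is where the generating function enters. I would use the standard exponential-generating-function product rule: distributing $N=n-i$ distinguishable classifiers into $m-1$ distinguishable classes with each class holding at most $i-1$ corresponds to the per-class factor $\sum_{j=0}^{i-1} x^j/j!$, so the number of admissible distributions equals $N!\,[x^{N}]\big(\sum_{j=0}^{i-1} x^j/j!\big)^{m-1} = (n-i)!\,\varphi_i$, with $\varphi_i=[x^{n-i}]\mathcal{G}^m_i(x)$ exactly as defined. Multiplying the count $\binom ni (n-i)!\,\varphi_i$ by the weight $p^i(1-p)^{n-i}$, dividing by $K$, summing over $i$, and then pulling out the common factor $(1-p)^n$ while writing $p^i(1-p)^{-i}=(p/(1-p))^i$ yields the displayed expression for $\mathcal{T}(p)$.

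It remains to justify the lower summation limit $\lceil n/m\rceil$. Here I would argue that for $i<n/m$ the total capacity $(m-1)(i-1)$ of the incorrect classes is strictly smaller than the number $n-i$ of votes to place, so no admissible distribution exists and $\varphi_i=0$; hence $\lceil n/m\rceil$ is a safe starting index, and any still-infeasible terms it happens to include vanish automatically through the generating function. I expect the main obstacle to be precisely this generating-function count, namely getting the capacity bound $i-1$ right (strict versus weak winner) and the labelled-versus-unlabelled bookkeeping correct, together with confirming that the single global constant $K$, rather than a per-term factor, is the right normalizer. I would sanity-check the whole derivation on $m=2$, where the model becomes genuine $\mathrm{Bernoulli}(p)$ voting, $K=1$, and $\mathcal{T}(p)$ collapses to the binomial majority tail $\sum_{i\ge \lceil n/2\rceil}\binom ni p^i(1-p)^{n-i}$, and on small $(m,n)$ to confirm the strict-win tie convention.
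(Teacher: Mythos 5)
Your proposal is correct and takes essentially the same route as the paper's proof: condition on the number $i$ of classifiers voting for $c^*$, contribute $\binom{n}{i}$ for their positions and $p^i(1-p)^{n-i}$ for the weights, count the capacity-bounded (at most $i-1$ per wrong class) distributions of the remaining $n-i$ labelled votes over the $m-1$ wrong classes via the exponential generating function to get $\varphi_i (n-i)!$, and normalize by $K$. Your extra bookkeeping --- verifying $K=\left(p+(m-1)(1-p)\right)^n$ is the partition function of the weighted model and that $\varphi_i=0$ for infeasible $i$ below $\lceil n/m\rceil$ --- only makes explicit what the paper asserts without elaboration.
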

\begin{proof}
The formula can be rewritten as:
$$
\mathcal{T}(p) = \frac{1}{K}\sum_{i=\lceil \frac{n}{m} \rceil}^{n} \binom{n}{i}  p^i  \varphi_i(n-i)!(1-p)^{n-i}
$$
and corresponds to the sum of the probability of all the possible different profiles votes that elect $c^*$. We perform the sum varying $i$, an index that indicates the number of classifiers in the profile that vote for the correct label $c^*$. This number is between $\lceil \frac{n}{m} \rceil $ (since if $i < \lceil \frac{n}{m} \rceil$ that profile cannot elect $c^*$) and $n$ where all the classifiers vote for $c^*$. 
The binomial 
factor
expresses the number of possible positions, in the ordered profile of size $n$, of $i$ classifiers that votes for $c^*$. This is multiplied by the probability of these classifiers to vote $c^*$, that is $p^i$. 
The factors
$ \varphi_i(n-i)! $
correspond the number of possible combinations of votes of the $n-i$ classifiers (on the other candidates different from $c^*$) that ensure the winning of $c^*$.
This is computed as the number of possible combinations of $n-i$ objects extracted from a set $(m-1)$ objects, with a bounded number of repetitions (bounded by $i-1$ to ensure the winning of $c^*$).
The formula to use for counting the number of combinations of $D$ objects extracted from a set $A$ objects, with a bounded number of repetitions $B$, is: $\varphi_i D!$. In our case $A=m-1$ is the number of objects, $B=i-1$ is the maximum number of repetitions and $D=n-i$ the positions to fill and $ \varphi_i$ is the coefficient of $x^D$ in the expansion of the following generating function:
$$
\left( \sum_{j=0}^{B} \frac{x^j}{j!} \right) ^{A} \xRightarrow[B=i-1]{A=m-1}  \left( \sum_{j=0}^{i-1} \frac{x^j}{j!} \right) ^{m-1}= \mathcal{G}^m_i(x).
$$
Finally, the factor $(1-p)^{n-i}$ is the probability that the remaining $n-i$ classifiers do not elect $c^*$.
\qed
\end{proof}

\cri{For the sake of comprehension, we give an example that describes the computation of the probability of electing the correct class $c^*$, as formalized in Theorem \ref{prop:1}.}

\begin{example}
\cri{Considering an ensemble with 3 classifiers (i.e., $n=3$), each one with accuracy $p=0.8$. The number of classes in the dataset is $m=4$. The probability of choosing the correct class $c^*$ is given by the formula in Theorem~\ref{prop:1}. Specifically:}
$$\mathcal{T}(p) = (1 - 0.8)^{3} \frac{1}{K}\sum_{i=1}^{3} \varphi_i \cdot (3-i)! \binom{3}{i}  \left( \frac{0.8}{1-0.8} \right) ^i
$$
where $K = 1.728$ . In order to compute the value of each $\varphi_i$, we have to compute the coefficient of $x^{3-i}$ in the expansion of the generating function $\mathcal{G}_i^4(x)$.

\begin{description}
    \item[For $i=1$:] We have
$\mathcal{G}_1^4(x) = 1$, and
we are interested in the coefficient of $x^{n-i} = x^2$, thus $\varphi_1 = 0$.

\item[For $i=2$:] We have
$\mathcal{G}_2^4(x) = 1 + 3 x + 3 x^2 + x^3$, and
we are interested in the coefficient of $x^{n-i} = x^1$, thus $\varphi_2 = 3$.

\item[For $i=3$:] We have 
$\mathcal{G}_3^4(x) = 1 + 3 x + \frac{9}{2} x^2 + 4 x^3 + \frac{9}{4}x^4 + \frac{3}{4}x^5 + \frac{1}{8}x^6$, and
we are interested in the coefficient of $x^{n-i} = x^0$, thus $\varphi_3 = 1$.
\end{description}
\cri{
We can now compute the probability $\mathcal{T}(p)$:
}
\begin{align*}
\mathcal{T}(p) & =   \frac{0.008}{1.728} \cdot ( 
\varphi_1 \cdot (2)! \binom{3}{1} \cdot 4  + 
\varphi_2 \cdot (1)! \binom{3}{2} \cdot (4)^2  +
\varphi_3 \cdot (0)! \binom{3}{3} \cdot (4)^3 
) \\
& = 0.963.
\end{align*}
The result says that VORACE with 3 classifiers (each one with accuracy $p=0.8$) has a probability of $0.963$ of choosing the correct class $c^*$.
\end{example}

\cri{It is worth noting that $\mathcal{T}(p)=1$ 
when $p=1$, meaning that, when all the classifiers in the ensemble always predict the right class, our ensemble method always outputs the correct class as well \footnote{Formula \ref{eq:prob_exact} is equal to $1$ for $p=1$ because all the terms of the sum are equal to zero except the last term for $i=n$ ($K=1$ and $ \varphi_i(0)=1$ as well). This is equal to $1$ because we have $(1-p)^0=0^0$ and by convention $0^0=1$ when we are considering discrete exponents.}. Moreover, $\mathcal{T}(p)=0$ in the symmetric case in which $p=0$, that is when all the classifiers always predict a wrong class.}

\cri{Note that the independence assumption considered above is in line with previous studies (e.g., the same assumption is made in \cite{condorcet_th_jury,youngcondorcet}) and
it is a necessary simplification to obtain a closed formula to compute $\mathcal{T}(p)$. Moreover, in a realistic scenario, $p$ can be interpreted as representing the lower bound of the accuracy of the classifiers in the profile. It is easy to see that under this interpretation the value of $\mathcal{T}(p)$ as well represents a lower bound of the probability of electing the correct class $c^*$, given $m$ available classes, and a profile of $n$ classifiers.}

Although this theoretical result holds in a restricted scenario and with a specific voting rule, as we already noticed in our experimental evaluation in Section \ref{sec-exp}, the probability of choosing the correct class is always greater than or equal to each individual classifiers' accuracy. 

It is worth noting that the scenario considered above is similar to the one analyzed in the {\it Condorcet Jury Theorem} \cite{condorcet_th_jury},
which states that in a scenario with two candidates where each voter has probability $p>\frac{1}{2}$ to vote for the correct candidate, the probability that the correct candidate is chosen goes to $1$ as the number of votes goes to infinity. Some restrictions imposed by this theorem are partially satisfied also in our scenario: some voters (classifiers) are independent on each other (those that belong to a different classifier's category), since we generate them randomly. 
\cri{However, Theorem~\ref{prop:1} does not immediately follow from this result. Indeed, it represents a generalization because some of the Condorcet restrictions do not hold in our case, specifically:}
1) 2-class classification task does not hold, since VORACE can be used also with more than 2 classes; 
2) classifiers are generated randomly, thus we cannot ensure that the accuracy $p>\frac{1}{2}$, especially with more than two classes.
This work has been reinterpreted first by \cite{youngcondorcet} and successively extended by \cite{nitzanparoush} and \cite{shapleygrofman}, considering the cases in which the agents/voters have different $p_i$. \cri{However, the focus of these works is fundamentally different from ours, since their goal is to find the optimal decision rule that maximizes the probability that a profile elects the correct class.}

Given the different conditions of our setting, we cannot apply the Condorcet Jury Theorem, or the works cited above, as such. 
However, in Section~\ref{sec:Condorcet} we will formally see that considering $m= 2$, our formulation enforces the results stated by Condorcet Jury Theorem. 

Moreover, our work is in line with the analysis regarding maximum likelihood estimators (MLEs) for r-noise models \cite{handbooksocialchoice,ConitzerMLE}.  
An {\it r-noise model} is a noise model for ranking over a set of candidates, i.e., a family of probability distributions in the form $P(\cdot| u)$, where $u$ is the correct preference. This means that an r-noise model describes a voting process where there is a ground truth about the collective decision, although the voters do not know it.  In this setting, a MLE is a preference aggregation function $f$ that maximizes the product of the probabilities $P(v_i | u),\,i=1,\dots,n$ for a given voters' profile $R=(v_1,...,v_n)$. Finding a suitable $f$
corresponds to our goal.

MLEs for r-noise models have been studied in details by \citet{ConitzerMLE} assuming the noise is independent across votes. This corresponds to our preliminary assumption of the independence of the base classifiers. The first result in \cite{ConitzerMLE} states that given a voting rule, there always exists a r-noise model such that the voting rule can be interpreted as a MLE (see Theorem 1 in \cite{ConitzerMLE}). In fact, given an appropriate r-noise model, any scoring rule is a maximum likelihood estimator for winner under i.i.d. votes. 
Thus, for a given input sample, we can interpret the classifiers rankings as a permutation of the true ranking over the classes and the voting rule (like Plurality and Borda) used to aggregate these rankings as a MLE for an r-noise model on the original classification of the examples.
However, to the best of our knowledge, providing a closed formulation (i.e., considering only the main problem's parameters $p$, $m$ and $n$, and without having any information on the original true ranking or the noise model) to compute the probability of electing the winner (as provided in our Theorem \ref{prop:1}) for a given profile using Plurality is a novel and valuable contribution (see discussion on the attempts existing in the literature to define the formula in Section \ref{sec:voting_ensemble}).
We remind the reader that in our learning scenario the formula in Theorem \ref{prop:1} is particularly useful because it computes a lower bound on the accuracy of VORACE (that is, the probability that VORACE selects the correct class) when knowing only the accuracy of the base classifiers, considering them as black boxes.


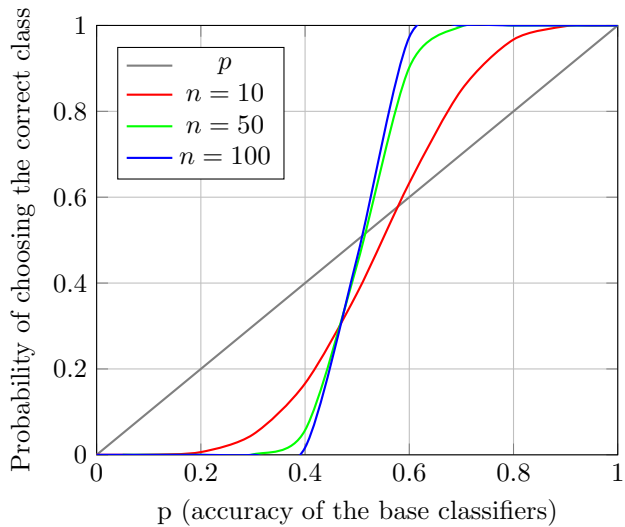
\begin{figure}
\centering
\begin{tikzpicture}
\normalsize
\begin{axis}[legend style={at={(0.2,0.95)},
		anchor=north,legend columns=1}, xmin=0,xmax=1,ymin=0, ymax=1, xlabel=p (accuracy of the base classifiers), ylabel=Probability of choosing the correct class,xmajorgrids, ymajorgrids,every axis plot/.append style={thick}]
\addplot+[gray,mark=,solid,smooth] 
coordinates
{
(0,0)
(0.05,0.05)
(0.1,0.1)
(0.15,0.15)
(0.2,0.2)
(0.25,0.25)
(0.3,0.3)
(0.35,0.35)
(0.4,0.4)
(0.45,0.45)
(0.5,0.5)
(0.55,0.55)
(0.6,0.6)
(0.65,0.65)
(0.7,0.7)
(0.75,0.75)
(0.8,0.8)
(0.85,0.85)
(0.9,0.9)
(0.95,0.95)
(1,1)
};
\addplot+[red,mark=, solid,smooth] 
coordinates
{
(0,0)
(0.05,0.0)
(0.1,0.000146903)
(0.2,0.00636938)
(0.3,0.047349)
(0.4,0.166239)
(0.5,0.376953)
(0.6,0.633103)
(0.7,0.849732)
(0.8,0.967207)
(0.9,0.998365)
(0.95,0.999936)
(1,1)
};
\addplot+[green,mark=,solid,smooth] 
coordinates
{
(0.05,0.0)
(0.1,0.0)
(0.2, 4.9241e-7)
(0.3,0.000933179)
(0.4,0.0573438)
(0.5,0.443862)
(0.6,0.902193)
(0.7,0.99763)
(0.8,0.999998)
(0.9,1.0)
(0.95,1.0)
(1,1)
};
\addplot+[blue,mark=,solid,smooth] 
coordinates
{
(0,0)
(0.05,0.0)
(0.1,0.0)
(0.2,5.17989e-12)
(0.3,9.03469e-6)
(0.4,0.0167617)
(0.5,0.460205)
(0.6,0.972901)
(0.7,0.999978)
(0.8,1.0)
(0.9,1.0)
(0.95,1.0)
(1,1)
};
\legend{$p$, $n=10$, $n=50$, $n=100$}
\end{axis}
\end{tikzpicture}
\caption{Probability of choosing the correct class $c^*$ varying the size of the profile $n \in \{10,50,100\}$ and keeping $m$ constant to 2, where each classifier has the same probability $p$ of classifying a given instance correctly.}
\label{fig:formula_p_n}
\end{figure}
More precisely, we analyze the relationship between the probability of electing the winner (i.e., Formula~\ref{eq:prob_exact}) and the accuracy of each individual classifier $p$.
Figure~\ref{fig:formula_p_n}\footnote{\cri{Figure~\ref{fig:formula_p_n} has been created by grid sampling the values of $p \in [0,1]$ with step $0.05$ and by performing an exact computation of the value of $\mathcal{T}(p)$ for each specific value of $p$ in the sampling set with $n \in \{10,50,100\}$ and $m=2$. We then connected these values with the smoothing algorithm of \emph{TikZ} package.}} shows the probability of choosing the correct class varying the size of the profile $n \in \{10,50,100\}$ and keeping $m=2$.
We see that, 
by augmenting the size of the profile $n$, the probability that the ensemble chooses the right class grows as well. 
\cri{However, the benefit is just incremental when base classifiers have high accuracy. We can see that when $p$ is high we reach a plateau where $\mathcal{T}(p)$ is very close to $1$ regardless of the number of classifiers in the profile. In a realistic scenario, having a high baseline accuracy in the profile is not to be expected, especially when we consider ``hard'' datasets and randomly generated classifiers.
In these cases (when the accuracy of the base classifiers in average is low), the impact of the number of classifiers is more evident (for example when $p = 0.6$).}

Thus, if $p>0.5$ and $n$ tends to infinity, then it is beneficial to use a profile of classifiers. This is in line with the result of the {\it Condorcet Jury Theorem}.


\section{Theoretical analysis: comparison with Condorcet Jury Theorem}\label{sec:Condorcet}
In this section we prove how, for $m=2$, Formula \ref{eq:prob_exact} enforces the results stated in the Condorcet Jury Theorem~\cite{condorcet_th_jury} (see Section~\ref{sec:tech-analysis} for the Condorcet Jury Theorem statement). 
Notice, as for Theorem 1, the adopted assumptions likely do not fully hold in practice, but are natural simplifications used for the sake of analysis.
Specifically, we need to prove the following theorem.

\begin{theorem}\label{thm:limit}
The probability of electing the correct class $c^*$, among $2$ classes, with a profile of an infinite number of classifiers, each one with accuracy $p \in [0,1]$, using Plurality, is given by:
\begin{equation} \label{eq:limit}
\lim_{n\to\infty}\mathcal{T}(p) = 
\begin{cases}
0 & p<0.5\\
0.5 & p=0.5\\
1  & p>0.5
\end{cases}
\end{equation}
\end{theorem}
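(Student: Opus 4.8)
The plan is to reduce Formula~\ref{eq:prob_exact} to a binomial tail and then invoke standard limit theorems. First I would specialize Theorem~\ref{prop:1} to $m=2$. In that case the normalization constant collapses nicely: since $m-1=1$, the binomial theorem gives
\begin{equation*}
K=\sum_{j=0}^{n}\binom{n}{j}p^j(1-p)^{n-j}=\bigl(p+(1-p)\bigr)^n=1.
\end{equation*}
For the generating function, $\mathcal{G}^2_i(x)=\sum_{j=0}^{i-1}\frac{x^j}{j!}$ is a single (rather than an $(m-1)$-fold) sum, so its coefficient of $x^{n-i}$ is simply $\frac{1}{(n-i)!}$ whenever $n-i\le i-1$ and $0$ otherwise. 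Hence $\varphi_i(n-i)!$ equals $1$ exactly when $i>\frac{n}{2}$ and vanishes otherwise, so only the terms corresponding to a strict majority survive.

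Substituting these facts into Formula~\ref{eq:prob_exact}, I would obtain the clean expression
\begin{equation*}
\mathcal{T}(p)=\sum_{i>n/2}\binom{n}{i}p^i(1-p)^{n-i}=\Pr\!\left[X>\tfrac{n}{2}\right],\qquad X\sim\mathrm{Bin}(n,p).
\end{equation*}
This is exactly the probability that a strict majority of the $n$ independent voters, each correct with probability $p$, selects $c^*$ --- precisely the quantity the Condorcet Jury Theorem concerns. With this identification the theorem becomes a statement about the limiting behaviour of a binomial tail at the threshold $n/2$, and this reduction is the conceptual heart that connects Theorem~\ref{prop:1} to Condorcet.

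For $p\neq\frac{1}{2}$ the argument is then immediate from the weak Law of Large Numbers: since $X/n\to p$ in probability, the event $\{X/n>\frac{1}{2}\}$ has probability tending to $1$ when $p>\frac{1}{2}$ and to $0$ when $p<\frac{1}{2}$, yielding the first and third cases (and covering the degenerate endpoints $p=0,1$ trivially). The delicate case is $p=\frac{1}{2}$, which I expect to be the main obstacle, because the threshold sits exactly at the mean and the answer depends on how the mass splits at the centre. Here I would use the symmetry of $\mathrm{Bin}(n,\frac{1}{2})$ about $n/2$: for odd $n$ no tie is possible and $\Pr[X>\frac{n}{2}]=\Pr[X<\frac{n}{2}]=\frac{1}{2}$ exactly, while for even $n$ symmetry gives $\Pr[X>\frac{n}{2}]=\frac{1}{2}\bigl(1-\Pr[X=\frac{n}{2}]\bigr)$ with $\Pr[X=\frac{n}{2}]=\binom{n}{n/2}2^{-n}\sim\sqrt{2/(\pi n)}\to 0$ by Stirling's formula. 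Either parity therefore yields the limit $\frac{1}{2}$; equivalently, one can read off the same value directly from the Central Limit Theorem, since at $p=\frac{1}{2}$ the standardized tail converges to $\Pr[Z>0]=\frac{1}{2}$ for a standard normal $Z$.
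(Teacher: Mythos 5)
Your proof is correct, and it reaches the binomial-tail form of $\mathcal{T}(p)$ by the same initial simplification the paper uses in Lemma~\ref{lemma:t_p_sum} ($K=1$ and $\varphi_i(n-i)!\in\{0,1\}$ when $m=2$), but from there it follows a genuinely different route. The paper treats $p=0.5$ essentially as you do, by direct computation with the symmetry of the binomial coefficients, and then handles $p\neq 0.5$ analytically: it proves that $\mathcal{T}(p)$ is monotone increasing in $p$ for every finite $n$ by computing its derivative in closed form, shows via the squeeze theorem that $\partial\mathcal{T}(p)/\partial p\to 0$ as $n\to\infty$ for $p\neq 0.5$, and then pins down the limiting step function using $\mathcal{T}(0)=0$ and $\mathcal{T}(1)=1$. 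You instead identify $\mathcal{T}(p)=\Pr[X>n/2]$ with $X\sim\mathrm{Bin}(n,p)$ and invoke the weak Law of Large Numbers, which disposes of both $p>0.5$ and $p<0.5$ at once. Your route is shorter and in places more rigorous: the paper's passage from ``the derivative tends to $0$ pointwise'' to ``the limit is piecewise constant'' glosses over a uniformity issue, and its claim that $\binom{n}{n/2}/2^{n+1}\to 0$ is justified only by the bound $\binom{n}{n/2}<2^n$, which by itself merely bounds the ratio by $1/2$, whereas your Stirling estimate $\binom{n}{n/2}2^{-n}\sim\sqrt{2/(\pi n)}$ actually proves the convergence. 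What the paper's longer route buys is side results of independent interest---finite-$n$ monotonicity of $\mathcal{T}$ in $p$ and an explicit derivative formula---which the LLN argument does not yield. One small discrepancy worth flagging: your strict-majority condition $i>n/2$ is the faithful specialization of Theorem~\ref{prop:1} (for even $n$ the tie term $i=n/2$ has $\varphi_i=0$), while Lemma~\ref{lemma:t_p_sum} starts the sum at $\lceil n/2\rceil$ and thus includes that tie term; the two expressions differ by $\binom{n}{n/2}p^{n/2}(1-p)^{n/2}$ for even $n$, which vanishes as $n\to\infty$, so the limit statement is unaffected either way.
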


In Figure~\ref{fig:limit} we can see a visualization of the function $\mathcal{T}(p)$ when  $n \rightarrow \infty$, as described in Theorem~\ref{thm:limit}. In what follows we will prove this by showing that the function $\mathcal{T}(p)$ is monotonic increasing and when $n \rightarrow \infty$ is equal to $0$.

\begin{figure}[h!]
\centering
\begin{tikzpicture}
\begin{axis}[legend style={at={(0.225,1.15)},
		anchor=north,legend columns=1}, xmin=0,xmax=1,ymin=0, ymax=1, xlabel=p (accuracy of the base classifiers), ylabel=Probability of choosing the correct class,xmajorgrids, ymajorgrids]
\draw [dashed] (0.5,0) -- (0.5,1);
\addplot[line width=3pt,blue,samples at={0,0.5}] {0};
\addplot[line width=3pt,blue,samples at={0.5,1}] {1};
\addplot[ultra thick,blue,mark=*,mark options={fill=white},samples at={0.5,0.5}] {0};
\addplot[ultra thick,blue,mark=*,mark options={fill=white},samples at={0.5,0.5}] {1};
\addplot[ultra thick,blue,mark=*,samples at={0.5,0.5}] {0.5};
\legend{$\mathcal{T}(p)$ for  $n \rightarrow \infty$}
\end{axis}
\end{tikzpicture}
\caption{The probability of electing the correct class $c^*$, among $2$ classes, with a profile of an infinite number of classifiers ($n \to \infty$), each one with accuracy $p \in [0,1]$, using Plurality.}
\label{fig:limit}
\end{figure}
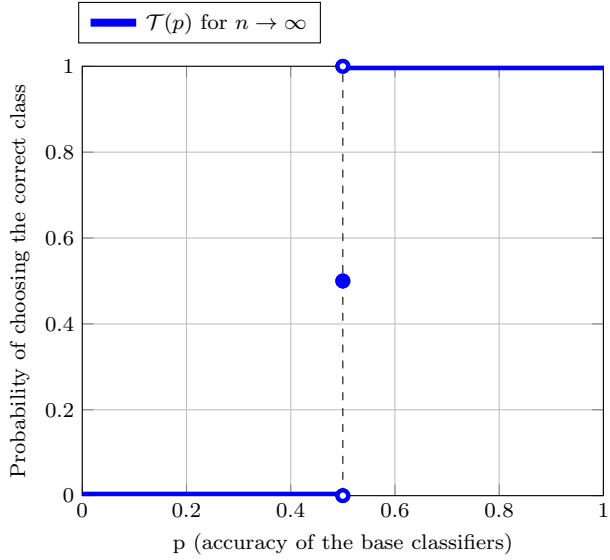

Firstly, we find an alternative, more compact, formulation for $\mathcal{T}(p)$ in the case of binary datasets (only two alternatives/candidates, i.e., $m=2$) in the following Lemma.
\begin{lemma}\label{lemma:t_p_sum}
The probability of electing the correct class $c^*$, among $2$ classes, with a profile of $n$ classifiers, each one with accuracy $p \in [0,1]$ , using Plurality is given by:
\begin{equation} \label{eq:t_p_sum}
\mathcal{T}(p) = \sum_{i=\lceil \frac{n}{2} \rceil}^{n}  \binom{n}{i} {p}^i (1-p)^{n-i}.
\end{equation}
\end{lemma}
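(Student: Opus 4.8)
The plan is to obtain Formula~\eqref{eq:t_p_sum} as a direct specialization of Theorem~\ref{prop:1} to the binary case $m=2$, so that the general generating-function machinery collapses to the familiar upper binomial tail. Three quantities have to be simplified: the normalization constant $K$, the generating function $\mathcal{G}^m_i$, and the coefficients $\varphi_i$. First I would dispose of $K$: setting $m=2$ makes $(m-1)^{n-j}=1^{n-j}=1$, so $K=\sum_{j=0}^{n}\binom{n}{j}p^{j}(1-p)^{n-j}=(p+(1-p))^{n}=1$ by the binomial theorem. Thus the normalization disappears entirely in the binary case.

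Next I would simplify the combinatorial factor $\varphi_i(n-i)!$. With $m-1=1$ the generating function reduces to the single truncated exponential $\mathcal{G}^2_i(x)=\sum_{j=0}^{i-1}x^{j}/j!$, a polynomial of degree $i-1$. Its coefficient of $x^{n-i}$ is $1/(n-i)!$ when $0\le n-i\le i-1$ and $0$ otherwise, so $\varphi_i(n-i)!$ equals $1$ precisely when $n-i\le i-1$, i.e.\ when $i\ge (n+1)/2$, and vanishes otherwise. In other words, $\varphi_i(n-i)!$ is exactly the indicator that the $i$ classifiers voting for $c^{*}$ form a \emph{strict} majority of the $n$ votes, which is the only way $c^{*}$ wins when there are two candidates.

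Substituting $K=1$ and this indicator into \eqref{eq:prob_exact}, each surviving term becomes $(1-p)^{n}\binom{n}{i}\bigl(\tfrac{p}{1-p}\bigr)^{i}=\binom{n}{i}p^{i}(1-p)^{n-i}$, and the sum runs over those $i$ with $i\ge (n+1)/2$. For odd $n$ this lower limit coincides with $\lceil n/2\rceil$, yielding exactly \eqref{eq:t_p_sum}; as a quick independent check, the same tail is what one obtains directly by noting that the number of classifiers voting for $c^{*}$ is $\mathrm{Binomial}(n,p)$ and that majority elects $c^{*}$ iff this count exceeds $n/2$.

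The one delicate point, and the main obstacle, is the bookkeeping at the summation boundary. Collapsing $\varphi_i(n-i)!$ to a $0/1$ indicator and reading off the correct starting index is precisely where the strict-majority condition $i>n/2$ enters; in particular, for even $n$ the tie term $i=n/2$ is suppressed by $\varphi_{n/2}=0$, since the polynomial $\mathcal{G}^2_{n/2}$ has degree $n/2-1<n/2$, so this case must be handled explicitly. This boundary term is harmless for the downstream limit in Theorem~\ref{thm:limit}, because $\binom{n}{n/2}p^{n/2}(1-p)^{n/2}\to 0$ as $n\to\infty$ for every $p\ne \tfrac12$, but it is exactly the spot where care is needed to reconcile the derived lower index with the stated bound $\lceil n/2\rceil$.
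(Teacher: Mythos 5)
Your proof is correct and follows essentially the same route as the paper's: specialize Theorem~\ref{prop:1} to $m=2$, collapse the generating function to a single truncated exponential, and recognize the upper binomial tail. In fact, your treatment is \emph{more} careful than the paper's at the one delicate spot. The paper's proof asserts $\varphi_i=\frac{1}{(n-i)!}$ for every $i$ in the summation range and cancels $(n-i)!$ unconditionally, whereas, as you observe, $\mathcal{G}^2_i$ has degree $i-1$, so $\varphi_i$ vanishes unless $n-i\le i-1$, i.e.\ unless $i$ is a strict majority. For odd $n$ this is invisible, since strict majority coincides with $i\ge\lceil n/2\rceil$; but for even $n$ Theorem~\ref{prop:1} (whose counting requires every other candidate to receive at most $i-1$ votes) suppresses the tie term $i=n/2$, so what actually follows from it is the tail starting at $n/2+1$, not the stated $\lceil n/2\rceil$. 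The paper's Lemma, and its later $p=0.5$ computation in Theorem~\ref{thm:limit} (which keeps the full tie term), silently adopt the opposite tie-breaking convention, an inconsistency your proposal correctly flags rather than papering over. As you note, the discrepancy is a single term $\binom{n}{n/2}p^{n/2}(1-p)^{n/2}$ that is asymptotically negligible, so all downstream limit results hold under either convention. A minor stylistic difference: you dispose of the normalization at the outset ($K=1$ when $m=2$, by the binomial theorem), while the paper carries $K$ through as a ratio and only simplifies the denominator at the end; the substance is the same.
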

\begin{proof}
It is possible to note how for $m=2$, the values of $\varphi_i$ is $\frac{1}{(n-i)!}$. 

This because:
\begin{align*}
\mathcal{G}^2_i(x) & = \left( \sum_{j=0}^{i-1} \frac{x^j}{j!} \right) \\
& = 1 + x + \frac{1}{2}x^2 + \cdots + \boldsymbol{\frac{1}{(n-i)!}}x^{n-i} + \cdots + \frac{1}{(i-1)!}x^{i-1}.
\end{align*}

Consequently,  with further algebraic simplifications, we have the following:

\begin{align*}
\mathcal{T}(p) 
&= \frac{1}{K}(1-p)^n \sum_{i=\lceil \frac{n}{2} \rceil}^{n} \boldsymbol{\varphi_i} (n-i)!  \binom{n}{i} \left( \frac{p}{1-p} \right) ^i \\
&= \frac{1}{K}(1-p)^n \sum_{i=\lceil \frac{n}{2} \rceil}^{n} \frac{\boldsymbol{(n-i)!}}{\boldsymbol{(n-i)!}}  \binom{n}{i} \left( \frac{p}{1-p} \right) ^i  \\
&= \frac{1}{\boldsymbol{K}}(1-p)^n \sum_{i=\lceil \frac{n}{2} \rceil}^{n}   \binom{n}{i} \left( \frac{p}{1-p} \right) ^i \\ 
&= \frac{(1-p)^n \sum_{i=\lceil \frac{n}{2} \rceil}^{n}  \binom{n}{i} \left( \frac{p}{1-p} \right) ^i}{\sum_{j=0}^{n} \binom{n}{j} p^j (1-p)^{\boldsymbol{n-j}}} \\ 
&= \frac{\boldsymbol{(1-p)^n} \sum_{i=\lceil \frac{n}{2} \rceil}^{n}  \binom{n}{i} \left( \frac{p}{1-p} \right) ^i}{\boldsymbol{(1-p)^n}\sum_{j=0}^{n} \binom{n}{j} \left(\frac{p}{1-p}\right)^j} \\ 
\Rightarrow \mathcal{T}(p) &= \frac{\sum_{i=\lceil \frac{n}{2} \rceil}^{n}  \binom{n}{i} \left( \frac{p}{1-p} \right) ^i}{\sum_{i=0}^{n} \binom{n}{i} \left( \frac{p}{1-p} \right)^i}.
\end{align*}

Now, looking at the denominator, by definition of binomial coefficient, we can note that:
$$
\sum_{i=0}^{n} \binom{n}{i} \left( \frac{p}{1-p} \right)^i = (1+\frac{p}{1-p})^n
= (1-p)^{-n}.$$
Thus, we obtain:
$$
\mathcal{T}(p) = \sum_{i=\lceil \frac{n}{2} \rceil}^{n}  \binom{n}{i} {p}^i (1-p)^{n-i}.
$$
\qed
\end{proof}

We will now consider the two cases separately: (i) $p=0.5$, and (ii) $p>0.5$ or $p<0.5$. For both cases we will prove the corresponding statement of Theorem~\ref{thm:limit}.

\subsubsection{Case: $p=0.5$}
We will now proceed to prove the second statement of Theorem~\ref{thm:limit}.

\begin{proof}
If $p=0.5$ we have that:

$$
\mathcal{T}(0.5) = \frac{\sum_{i=\lceil \frac{n}{2} \rceil}^{n}  \binom{n}{i}}{2^n }.
$$
We note that, if $n$ is an odd number:
$$ \sum_{i=\ceil*{\frac{n}{2}}}^{n}  \binom{n}{i} = \frac{\sum_{i=0}^{n}  \binom{n}{i}}{2} = 2^{n-1},$$
while if $n$ is even:
\begin{align*}
\sum_{i=\lceil \frac{n}{2} \rceil}^{n}  \binom{n}{i}  
&= \frac{\sum_{i=0}^{n}  \binom{n}{i}}{2} + \frac{1}{2}\binom{n}{\lceil \frac{n}{2} \rceil}. \\
\end{align*}
Thus, we have the two following cases, depending on $n$:
\begin{align}
\mathcal{T}(0.5) &=  \frac{2^{n-1}}{2^{n}} = 0.5, &\text{ if } n \text{ is odd}; \\
\mathcal{T}(0.5) &=  \frac{2^{n-1}+ \frac{1}{2}\binom{n}{\frac{n}{2}}}{2^{n}} = 0.5 + \frac{\frac{1}{2}\binom{n}{\frac{n}{2}}}{2^{n}}, &\text{ if } n \text{ is even}.
\end{align}
We can see that, when $n$ is odd, the following term becomes $0$ if $n$ tend to infinity:
\[
\lim_{n\to\infty} \frac{\frac{1}{2}\binom{n}{ \frac{n}{2}}}{2^{n}} = \lim_{n\to\infty} \frac{\binom{n}{ \frac{n}{2}}}{2^{n+1}} = 0.
\]
This limit is an indeterminate form $\frac{\infty}{\infty}$, that can be easily solved considering that $\binom{n}{ \frac{n}{2}} < {2^{n}}$. Given this observation we can see that the denominator prevails making the limit going to $0$.
Thus, we proved that:
\[
\lim_{n\to\infty} \mathcal{T}(0.5) = 0.5.
\]
\qed
\end{proof}

We note that if $n$ is odd $\mathcal{T}(0.5)=0.5$ also for small values of $n$, while if $n$ is even, $\mathcal{T}(0.5)$ converges to $0.5$ and it is equal to $0.5$ only when $n \to\infty$.

\subsection{Monotonicity and analysis of the derivative}
In this section, we first show that $\mathcal{T}(p)$ (see Equation~\ref{eq:t_p_sum}) is  monotonic increasing by proving that its derivative is greater or equal to zero.
Finally, we will see that, at the limit (for $n \to \infty$), the derivative is equal to zero for every $p \in [0,1]$ excluding $0.5$.

\begin{lemma}
The function $\mathcal{T}(p)$, describing the probability of electing the correct class $c^*$, among $2$ classes, with a profile of a $n$ classifiers, each one with accuracy $p \in [0,1]$ , using Plurality is monotonic increasing.
\end{lemma}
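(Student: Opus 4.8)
The plan is to establish monotonicity by differentiating the compact formula of Lemma~\ref{lemma:t_p_sum} (Equation~\ref{eq:t_p_sum}) and showing the derivative is nonnegative on all of $[0,1]$. Writing $k=\lceil \frac{n}{2}\rceil$ and $f_i(p)=\binom{n}{i}p^i(1-p)^{n-i}$, I would apply the product rule to each summand to obtain
$$
f_i'(p)=\binom{n}{i}\Bigl[i\,p^{i-1}(1-p)^{n-i}-(n-i)\,p^{i}(1-p)^{n-i-1}\Bigr].
$$

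The key step is to recognize the telescoping structure hidden in this expression. Using the two binomial identities $i\binom{n}{i}=n\binom{n-1}{i-1}$ and $(n-i)\binom{n}{i}=n\binom{n-1}{i}$, each derivative $f_i'(p)$ becomes the difference $g_{i-1}(p)-g_i(p)$ of consecutive terms of the single family $g_j(p)=n\binom{n-1}{j}p^{j}(1-p)^{n-1-j}$. Summing over $i$ from $k$ to $n$ then collapses to the two boundary terms,
$$
\mathcal{T}'(p)=g_{k-1}(p)-g_{n}(p).
$$
Since $\binom{n-1}{n}=0$, the top term vanishes, leaving the clean closed form
$$
\mathcal{T}'(p)=n\binom{n-1}{k-1}\,p^{\,k-1}(1-p)^{\,n-k}.
$$

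Finally I would observe that this single remaining term is a product of nonnegative factors for every $p\in[0,1]$, so $\mathcal{T}'(p)\ge 0$ throughout the interval and $\mathcal{T}$ is monotonic increasing. I do not anticipate any genuine obstacle: the only slightly delicate point is spotting that the derivative of a binomial tail telescopes, but once each $f_i'$ is rewritten via the two factorial identities the cancellation is immediate and no estimation is required. As a bonus, this closed form for $\mathcal{T}'(p)$ is precisely what the subsequent limit argument will need, since it makes transparent how the derivative concentrates near $p=\tfrac12$ as $n\to\infty$.
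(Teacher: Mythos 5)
Your proof is correct and follows essentially the same route as the paper: differentiate the tail-sum formula of Lemma~\ref{lemma:t_p_sum} term by term and show that the sum of derivatives collapses to the single nonnegative term $n\binom{n-1}{k-1}p^{k-1}(1-p)^{n-k}$, which coincides with the paper's closed form $\lceil \frac{n}{2}\rceil\binom{n}{\lceil \frac{n}{2}\rceil}p^{\lceil \frac{n}{2}\rceil-1}(1-p)^{n-\lceil \frac{n}{2}\rceil}$ via the identity $\lceil \frac{n}{2}\rceil\binom{n}{\lceil \frac{n}{2}\rceil}=n\binom{n-1}{\lceil \frac{n}{2}\rceil-1}$. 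The only difference is in how the collapse is justified: the paper reaches it by algebraic factoring and then asserts the cancellation of all but the boundary term in one unexplained step, whereas your explicit telescoping via $i\binom{n}{i}=n\binom{n-1}{i-1}$ and $(n-i)\binom{n}{i}=n\binom{n-1}{i}$ makes that cancellation transparent, so your write-up actually fills in the one gap in the paper's argument.
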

\begin{proof}
We know from Equation~\ref{eq:t_p_sum} in Lemma~\ref{lemma:t_p_sum} that 
$$\mathcal{T}(p) = \sum_{i=\lceil \frac{n}{2} \rceil}^{n}  \binom{n}{i} {p}^i (1-p)^{n-i} ~.$$ We want now to prove that $\mathcal{T}(p)\geq 0$.
\begin{align*}
\frac{\partial \mathcal{T}(p)}{\partial p}
& =   \frac{\partial \left(\sum_{i=\lceil \frac{n}{2} \rceil}^{n}  \binom{n}{i} {p}^i (1-p)^{n-i}\right)}{\partial p} \\
& =  \sum_{i=\lceil \frac{n}{2} \rceil}^{n}  \binom{n}{i}  \frac{\partial \left( {p}^i (1-p)^{n-i}\right)}{\partial p} \\
& =  \sum_{i=\lceil \frac{n}{2} \rceil}^{n}  \binom{n}{i} \left( \frac{\partial \left(     p^i \right)}{\partial p} (1-p)^{n-i} + p^i\frac{\partial \left(  (1-p)^{n-i}   \right)}{\partial p} \right)\\
& = \sum_{i=\lceil \frac{n}{2} \rceil}^{n}  \binom{n}{i} \left( i p^{i-1} (1-p)^{n-i} - p^i(n-i)(1-p)^{n-i-1}\right)\\
& =  \sum_{i=\lceil \frac{n}{2} \rceil}^{n}  \binom{n}{i} p^{i-1}(1-p)^{n-i-1}\left( i-pi-pn+pi\right)\\
& =  \sum_{i=\lceil \frac{n}{2} \rceil}^{n}  \binom{n}{i} p^{i-1}(1-p)^{n-i-1}\left( i-pn\right)\\
& =  (1-p)^{n-1} \sum_{i=\lceil \frac{n}{2} \rceil}^{n}  \binom{n}{i} p^{i-1}(1-p)^{-i}\left( i-p n \right)\\
& =  (1-p)^{n-1} \sum_{i=\lceil \frac{n}{2} \rceil}^{n}  \binom{n}{i}
\left(\frac{p}{1-p}\right)^{i-1} \left(\frac{i}{p}-n\right)\\
& =  (1-p)^{n-1} \left(\frac{1-p}{p} \right) \left\lceil \frac{n}{2} \right\rceil \binom{n}{\lceil \frac{n}{2} \rceil} \left( \frac{p}{1-p}\right)^{\lceil \frac{n}{2} \rceil}\\
& = p^{\lceil \frac{n}{2} \rceil -1} (1-p)^{n-\lceil \frac{n}{2} \rceil} \left\lceil \frac{n}{2} \right\rceil \binom{n}{\lceil \frac{n}{2} \rceil} \geq 0\\
\end{align*}
It is easy to see that the last row of the sequence is greater or equal to zero since each of the terms of the product is greater or equal to zero.
We proved that $\mathcal{T}(p)$ is monotonic increasing. \qed
\end{proof}

Let's see now that at the limit (with $n \to \infty$) the derivative is equal to zero for every $p\in[0,1]$ excluding $p=0.5$.

\begin{lemma}
Given the function $\mathcal{T}(p)$ describing the probability of electing the correct class $c^*$, among $2$ classes, with a profile of a $n$ classifiers, each one with accuracy $p \in [0,1]$ , using Plurality, we have that:
$$ \lim_{n\to\infty} \frac{\partial \mathcal{T}(p)}{\partial p} =0$$
\end{lemma}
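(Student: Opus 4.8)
The plan is to start from the closed-form expression for the derivative established in the previous lemma, namely
\[
\frac{\partial \mathcal{T}(p)}{\partial p} = \left\lceil \frac{n}{2} \right\rceil \binom{n}{\lceil \frac{n}{2} \rceil}\, p^{\lceil \frac{n}{2} \rceil -1} (1-p)^{n-\lceil \frac{n}{2} \rceil},
\]
which is a single nonnegative term, and to show that it vanishes as $n \to \infty$ for every fixed $p \in [0,1] \setminus \{0.5\}$. The boundary values are immediate: for $p=0$ the factor $p^{\lceil n/2 \rceil -1}$ is $0$ as soon as $\lceil n/2 \rceil > 1$, and symmetrically for $p=1$ the factor $(1-p)^{n-\lceil n/2 \rceil}$ vanishes, so the derivative is identically $0$ for all large $n$ in both cases.

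For the interior case $p \in (0,1)\setminus\{0.5\}$ I would avoid sharp asymptotics and rely on the elementary bound $\binom{n}{\lceil n/2\rceil} \le 2^n$. Substituting it and factoring the powers of $p$ and $1-p$, the term is at most a bounded multiple (depending only on $p$) of
\[
\frac{n}{2}\, 2^n\, \bigl(p(1-p)\bigr)^{\lfloor n/2 \rfloor} = \frac{n}{2}\,\bigl(4p(1-p)\bigr)^{\lfloor n/2 \rfloor},
\]
where the bounded multiple absorbs the at-most-one-unit mismatch between the exponents $\lceil n/2\rceil$ and $n-\lceil n/2\rceil$. The decisive observation is that $4p(1-p)\le 1$ with equality only at $p=0.5$, so for $p\neq 0.5$ we have $r:=4p(1-p)<1$ and the bound reduces to $\tfrac{n}{2}\,r^{\lfloor n/2\rfloor}$, a polynomial times a geometrically decaying factor, which tends to $0$. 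Sandwiching the nonnegative derivative between $0$ and this bound then gives the claim.

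I expect the only real obstacle to be the bookkeeping around the parity of $n$: the ceiling $\lceil n/2\rceil$ equals $n/2$ for even $n$ and $(n+1)/2$ for odd $n$, so the two powers of $p$ and $1-p$ are not exactly equal, and I would either treat the two parities separately or, as above, collect the discrepancy into a factor bounded by $\max(p,1-p)^{-1}$ that does not affect the limit. It is worth noting that the argument genuinely breaks at $p=0.5$, where $r=1$ and the same manipulation leaves the divergent factor $\Theta(\sqrt{n})$ (visible through Stirling's approximation $\binom{n}{n/2}\sim 2^n/\sqrt{\pi n/2}$); this is exactly why that point must be excluded from the statement.
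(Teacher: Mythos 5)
Your proof is correct and follows the same overall strategy as the paper's own: start from the closed-form, single-term expression for the derivative, bound the combinatorial prefactor, observe that $4p(1-p)<1$ exactly when $p\neq\tfrac{1}{2}$, and squeeze the nonnegative derivative between $0$ and a quantity that decays to zero. The one step where you genuinely diverge is also the step where your version is sounder than the paper's. The paper controls the prefactor via the inequality $1<\lceil n/2\rceil\binom{n}{\lceil n/2\rceil}<(2+\tfrac{1}{n})^n$ and then absorbs everything into $\bigl[(2+\tfrac{1}{n})^2\,p(1-p)\bigr]^{\lfloor n/2\rfloor}$; but that upper bound is false for large $n$: by Stirling, $\lceil n/2\rceil\binom{n}{\lceil n/2\rceil}\sim 2^n\sqrt{n/(2\pi)}$ while $(2+\tfrac{1}{n})^n\sim e^{1/2}\,2^n$, so the left side eventually exceeds the right (already at $n=20$: $10\binom{20}{10}=1{,}847{,}560$ versus $(2.05)^{20}\approx 1.72\times 10^6$). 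Your choice to use only the valid bound $\binom{n}{\lceil n/2\rceil}\le 2^n$, keep the polynomial factor $\lceil n/2\rceil$ explicit, and finish with the standard fact that a polynomial times $r^{\lfloor n/2\rfloor}$ with $r<1$ tends to $0$, repairs exactly this gap; the lemma itself survives precisely because the paper's intended estimate can be fixed the way you fixed it.

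Two minor points. First, the parity-mismatch factor you absorb is $1/p$ (arising when $n$ is even), so it is bounded by $\min(p,1-p)^{-1}$ rather than $\max(p,1-p)^{-1}$; this is harmless since $p$ is fixed and the factor is constant in $n$. Second, your treatment of the boundary values $p\in\{0,1\}$ and your Stirling-based remark that the derivative grows like $\Theta(\sqrt{n})$ at $p=\tfrac{1}{2}$ are both correct, and the exclusion of $p=\tfrac{1}{2}$ matches what the paper itself does: although the lemma as stated does not exclude that point, the paper's proof (and its later use of the lemma) restricts to $p\neq\tfrac{1}{2}$, exactly as you do.
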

\begin{proof}
Let's rewrite the function $\frac{\partial \mathcal{T}(p)}{\partial p}$ as follows:
$$
\frac{\partial \mathcal{T}(p)}{\partial p} 
= p^{\lceil \frac{n}{2} \rceil -1} (1-p)^{n-\lceil \frac{n}{2} \rceil} \left\lceil \frac{n}{2} \right\rceil \binom{n}{\lceil \frac{n}{2} \rceil} \\
$$
We will treat separately the case in which $n$ is an odd or even number:
$$ \frac{\partial \mathcal{T}(p)}{\partial p} =
\begin{cases*}
\left(p(1-p)\right)^{\lfloor \frac{n}{2} \rfloor }  \left\lceil \frac{n}{2} \right\rceil \binom{n}{\lceil \frac{n}{2} \rceil} & p odd \\ 
 \frac{(p(1-p))^{\frac{n}{2} }}{p}  \frac{n}{2} \binom{n}{ \frac{n}{2}} & p even \\ 
\end{cases*}
$$

{\bf Case 1: $n$ is odd.}
This is an indeterminate form $0\cdot\infty$, that can be solved considering that:
$$\left[p(1-p)\right]^{\lfloor \frac{n}{2} \rfloor }   \leq \frac{\partial \mathcal{T}(p)}{\partial p} \leq (2+\frac{1}{n})^n \left(p(1-p)\right)^{\lfloor \frac{n}{2} \rfloor }$$
where the inequality on the right follows from:
$$ 1< \left\lceil \frac{n}{2} \right\rceil \binom{n}{\lceil \frac{n}{2} \rceil} < (2+\frac{1}{n})^n.$$

Let's consider the function of the left inequality when $n\to\infty$.
\\Since $p(1-p)<1~\forall p \in [0,1]$, we know that:
$$ \lim_{n\to\infty} \left[p(1-p)\right]^{\lfloor \frac{n}{2} \rfloor } =0$$
 This can be proved with the following observation:
\begin{align*}
& p(1-p)<1 ~\forall p \in [0,1]
\Longleftrightarrow  (p- 1)^2  +p >0 ~\forall p \in [0,1]. \\
\end{align*}

Let's consider the function of the right inequality when $n\to\infty$:
 $$\lim_{n\to\infty} (2+\frac{1}{n})^n \left(p(1-p)\right)^{\lfloor \frac{n}{2} \rfloor } $$
 We know that this limit is zero because:
  $$(2+\frac{1}{n})^n \left(p(1-p)\right)^{\lfloor \frac{n}{2} \rfloor } = \left[(2+\frac{1}{n})^2 p(1-p)\right]^{\lfloor \frac{n}{2} \rfloor } $$
  and, given $p \in [0,1]$, always exists a value $N$ such that:
    $$\exists N>0 \text{ s.t. } \forall n>N, \left((2+\frac{1}{n})^2 p (1-p)\right) <1 \Longleftrightarrow p (1-p) < \frac{1}{4 + \frac{1}{n^2} + \frac{4}{n}} ~,$$
which for $n \to \infty$ holds if and only if $ p \neq \frac{1}{2}$.

We can now apply the \emph{squeeze theorem} and show that the derivative is equal to zero if $p \in [0,1], p \neq \frac{1}{2}$. It is important to notice that $\frac{\partial \mathcal{T}(p)}{\partial p}$ is not continuous in $p = \frac{1}{2}$.

{\bf  Case 2: $n$ is even.}
$$ \lim_{n\to\infty} \frac{\partial \mathcal{T}(p)}{\partial p} = \lim_{n\to\infty} \frac{1}{p}  ~~ (p(1-p))^{\frac{n}{2} } \frac{n}{2} \binom{n}{ \frac{n}{2}} $$ which is equivalent to:
$$ \frac{1}{p}  \lim_{n\to\infty}  (p(1-p))^{\frac{n}{2} } \frac{n}{2} \binom{n}{ \frac{n}{2}} $$

We saw before that:
$$ \lim_{n\to\infty} (p(1-p))^{\frac{n}{2} } \frac{n}{2} \binom{n}{ \frac{n}{2}} =0$$
Thus, the result holds also for the case in which $p$ is even.

\qed
\end{proof}
\subsubsection{Case: $p>0.5$ or $p<0.5$}

In the previous section, we proved that $\lim_{n\to\infty} \frac{\partial \mathcal{T}(p)}{\partial p}=0$ if $p\not=0.5$. This implies that we can rewrite $\mathcal{T}(p)$ for $n \to \infty$ in the following form:
\begin{equation} \label{eq:limit2}
\lim_{n\to\infty}\mathcal{T}(p) = 
\begin{cases}
v_1 & ~~~ p<0.5\\
v_2 & ~~~ p=0.5\\
v_3  & ~~~ p>0.5,
\end{cases}
\end{equation}
with $v_1$, $v_2$ and $v_3$ real numbers in $[0,1]$ such that $v_1 \leq v_2 \leq v_3$ (since $\mathcal{T}(p)$ is monotonic). We already proved that $v_2 = 0.5$. 

It is easy to see that $v_1 = 0$, because $\mathcal{T}(0) = 0, \forall n$ since all the terms of the sum are equal to zero. 
Finally, we have that $v_3 = 1$, because $\mathcal{T}(1) = 1, \forall n$.

In fact, $\mathcal{T}(1)$ corresponds to the probability of getting the correct prediction considering a profile of $n$ classifiers where each one elects the correct class with 100\% of accuracy. Since we are considering Plurality, which satisfies the axiomatic property of unanimity, the aggregated profile will also elect the correct class with 100\% of accuracy. Thus, the value of $\mathcal{T}(1)$ is $1$ for each $n>0$ and  consequently for $n\to\infty$. Thus, we showed that:
\begin{equation} \label{eq:limit3}
\lim_{n\to\infty}\mathcal{T}(p) = 
\begin{cases}
0 & ~~~ p<0.5\\
0.5 & ~~~ p=0.5\\
1  & ~~~ p>0.5,
\end{cases}
\end{equation}
This concludes the proof of Theorem~\ref{thm:limit}.

\section{Theoretical analysis: relaxing same-accuracy and independence assumptions}\label{sec:relaxing_assumptions}

In this section we will relax the assumptions made in Section~\ref{sec:tech-analysis} in two ways:
first, we remove the assumption that each classifier in the profile has the same accuracy $p$, allowing the classifiers to have a different accuracy (while still considering them independent); 
later we instead relax the independence assumption, allowing dependencies between classifiers by taking into account the presence of areas of the domain that are correctly classified by at least half of the classifiers simultaneously.

\subsection{Independent classifiers with different accuracy values}

Considering the same accuracy $p$ for all classifiers is not realistic, even if we set 
$p= \frac{1}{n}\sum_{i \in A} p_i$, that is, the average profile accuracy.
In what follows, we will relax this assumption by extending our study to the general case in which each classifier in the profile can have different accuracy, while still considering them independent. More precisely, we assume that each classifier $i$ has accuracy $p_i$ of choosing the correct class $c^*$.

In this case the probability of choosing the correct class for our ensemble method is:
$$
\frac{1}{K} \sum_{(S_1,\ldots,S_m) \in \Omega_{c^*}} \big[ \prod_{i \in \overline{S^*}} (1-p_i) \cdot \prod_{i \in S^*} p_i \big]
$$
where $K$ is the normalization function, $S$ is the set of all classifiers 
$S=\{1, 2, \ldots, n \}$; 
$S_i$ is the set of classifiers that elect candidate $c_i$; 
${S^*}$ is the set of classifiers that elect $c^*$; 
$\overline{S^*}$ is the complement of $S^*$ in S ($ \overline{S^*}=S \setminus S^*$);
and $\Omega_{c^*}$ is the set of all possible partitions of $S$ in which $c^*$ is chosen:
$$
    \Omega_{c^*} =  \{(S_1,\ldots,S_{m-1}) | \text{ partitions of } \overline{S^*} 
     \text{ s.t. } |S_i|< |S^*| ~ \forall i : c_i \not= c^*  \}.
$$

\cri{Notice that this scenario has been analyzed, although from a different point of view, in the literature (see for example \cite{nitzanparoush,shapleygrofman}). However, the focus of these works is fundamentally different from ours, since their goal is to find the optimal decision rule that maximizes the probability that a profile elects the correct class.}

\cri{Another relevant work is the one from \citet{epistemic_democracy} in which the authors study the case where  a profile of $n$ voters have to make a decision over $k$ options. Each voter $i$ has
independent probabilities $p_i^1, p_i^2, \cdots , p_i^k$ of voting for options $1, 2, \cdots , k$ respectively. The probability, $p_i^{c*}$ (i.e., the probability of voting for the correct outcome $c*$) exceeds each probabilities $p_i^{c}$ of voting for any of the incorrect outcomes, $c \not= c*$. The main difference with our approach is that in \citet{epistemic_democracy} the authors assume to know the full probability distribution over the outcomes for each voter, moreover they assume the voters have the same probability distribution. In this regard, we just assume to know the accuracy $p_i$ (different for each voter) for each classifier/voter (where $p_i = p_i^{c*}$). 
Thus, we provide a more general formula that covers more scenarios.}

\subsection{Dependent classifiers}

Until now, we assumed that the classifiers are independent: the set of the correctly classified examples of a specific classifier is selected by using an independent uniform distribution over all the examples. 

We now relax this assumption, by considering dependencies between classifiers by taking into account the presence of areas of the domain that are correctly classified by at least half of the classifiers simultaneously.
The idea is to estimate the amount of \textit{overlapping} of the classifications of the individual classifiers. 
We denote by $\varrho$ the ratio of the examples that are in the \textit{easy-to-classify} part of the domain (in which more than half of the classifiers is able to predict the correct label $c^*$). 
Thus, $\varrho$ equal to $1$ when the whole domain is \textit{easy-to-classify}.
Considering $n$ classifiers, we can define an upper-bound for $\varrho$:
$$  \varrho \leq \mathbb{P}[ \,\exists \, \mathcal{I}\, \subseteq S, \, |\mathcal{I}| \geq \frac{n}{2} \,\, \text{s.t.} \,\, \forall i \in \mathcal{I} \, \arg \max(x_i) = c^*]~.$$
In fact, $\varrho$ is bounded by the probability of the correct classification of an example by at least half of the classifiers (which are correctly classified by the ensemble). It is interesting to note that $\varrho \leq p$.
Removing the \textit{easy-to-classify} examples from the training dataset, we obtain the following accuracy for the other examples:
\begin{equation}
\label{eq:ptilde}
    \widetilde{p} = \frac{p -\varrho}{1 - \varrho} < p ~.
\end{equation}    
We are now ready to generalize Theorem \ref{prop:1}.

\begin{theorem}
\label{th:overlap}
The probability of choosing the correct class $c^*$ in a profile of $n$ classifiers with accuracy $p \in [0,1[$, $m$ classes and with an overlapping value $\varrho$, using Plurality to compute the winner, is larger than:
\begin{equation}
\label{eq:final_mik}
(1 -\varrho)\mathcal{T}(\widetilde{p})  + \varrho 
~.
\end{equation}
\end{theorem}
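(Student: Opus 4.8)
The plan is to split the domain into the \emph{easy-to-classify} region, of measure $\varrho$, and its complement, of measure $1-\varrho$, and to apply the law of total probability to the event that Plurality elects $c^*$. On the easy region, more than half of the $n$ classifiers output $c^*$ by definition; since a strict majority of first-place votes can never be overtaken under Plurality (every other class receives fewer than $n/2$ votes), the ensemble is correct with probability $1$ there. This contributes exactly the term $\varrho$, and the remaining task is to lower-bound the conditional probability of electing $c^*$ on the hard region and weight it by $1-\varrho$.

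First I would pin down the per-classifier accuracy on each region. Writing $p_{\text{easy}}$ and $p_{\text{hard}}$ for the accuracy of a single classifier restricted to the two regions, the overall accuracy decomposes as $p = \varrho\, p_{\text{easy}} + (1-\varrho)\, p_{\text{hard}}$. Using the trivial bound $p_{\text{easy}} \le 1$ and solving for $p_{\text{hard}}$ yields $p_{\text{hard}} \ge \frac{p-\varrho}{1-\varrho} = \widetilde{p}$, which is precisely the quantity defined in Equation~\ref{eq:ptilde}. This is the step that turns the statement into a genuine lower bound rather than an equality: we deliberately underestimate the accuracy retained on the hard part by replacing the unknown $p_{\text{hard}}$ with $\widetilde{p}$.

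Next I would invoke Theorem~\ref{prop:1} on the hard region. Having excised the easy examples — the source of the correlation between classifiers — the classifiers may again be treated as independent there, each with accuracy at least $\widetilde{p}$, so the probability that Plurality elects $c^*$ on the hard region equals $\mathcal{T}(p_{\text{hard}})$. Combining monotonicity of $\mathcal{T}$ (so that $\mathcal{T}(p_{\text{hard}}) \ge \mathcal{T}(\widetilde{p})$) with the total-probability decomposition gives
$$\mathbb{P}[\text{correct}] = \varrho + (1-\varrho)\,\mathcal{T}(p_{\text{hard}}) \ge \varrho + (1-\varrho)\,\mathcal{T}(\widetilde{p}),$$
which is exactly Formula~\ref{eq:final_mik}.

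The hard part will be justifying the two modelling assumptions that make this clean decomposition go through: that Plurality is surely correct on the easy region (immediate from the majority/plurality relationship, and harmless) and, more delicately, that after removing the easy region the residual classifiers may legitimately be regarded as independent so that Theorem~\ref{prop:1} applies with parameter $\widetilde{p}$. I would also need monotonicity of $\mathcal{T}$ for general $m$; the monotonicity lemma of Section~\ref{sec:Condorcet} is established only for $m=2$, so I would either extend it or argue that using $\widetilde{p} \le p_{\text{hard}}$ as the conservative input to $\mathcal{T}$ suffices to preserve the inequality.
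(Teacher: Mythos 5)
Your proposal follows essentially the same route as the paper's own (one-sentence) proof: split the domain according to $\varrho$, observe that a strict majority for $c^*$ guarantees the Plurality win on the \emph{easy-to-classify} part, and apply Theorem~\ref{prop:1} with accuracy $\widetilde{p}$ on the remainder, giving $\varrho + (1-\varrho)\mathcal{T}(\widetilde{p})$. The extra care you take — deriving $p_{\text{hard}} \ge \widetilde{p}$ from $p_{\text{easy}} \le 1$ and flagging that this then requires monotonicity of $\mathcal{T}$ for general $m$ — goes beyond what the paper records (the paper simply takes $\widetilde{p}$ of Equation~\ref{eq:ptilde} as the exact residual accuracy, making the independence-on-the-hard-region assumption implicit), and is a sound strengthening rather than a deviation.
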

\noindent The statement follows from Theorem \ref{prop:1} and splitting the correctly classified examples by the ratio defined by $\varrho$.
This result tells us that, in order to obtain an improvement of the individual classifiers' accuracy $p$, we need to maximize the Formula~\ref{eq:final_mik}.
This corresponds to avoid maximizing the overlap $\varrho$ 
(the ratio of the examples that are in the \textit{easy-to-classify} in which more than half of the classifiers is able to predict the correct label) 
since this would lead to a counter-intuitive effect: if we maximize the overlap of a set of classifiers with accuracy $p$, in the optimal case the accuracy of the ensemble would be $p$ as well (we recall that $\varrho$ is bounded by $p$). Our goal is instead to obtain a collective accuracy greater than $p$. Thus, the idea is that we want to focus also on the examples that are more difficult to classify.

The ideal case, to improve the final performance of the ensemble, is to generate a family of classifiers with a balanced trade-off between $\varrho$ and the portion of accuracy generated by classifying the difficult examples (i.e., the ones not in the \emph{easy-to-classify} set). A reasonable way to pursue this goal corresponds to choosing the base classifiers randomly.

\begin{example}
Consider $n=10$ classifiers with $m=2$ classes and assume the accuracy of each classifier in the profile is $p=0.7$. Following the previous observations, we know that $\varrho \leq 0.7$. In the case of the maximum overlap among classifiers, i.e., $\varrho = 0.7$, the accuracy of VORACE is $0.3 \mathcal{T}(\widetilde{p})  + 0.7$. Recalling Eq. \ref{eq:ptilde}, we have that $\widetilde{p} = 0$ and, consequently, $\mathcal{T}(\widetilde{p}) = \mathcal{T}(0) = 0$. Thus, the accuracy of VORACE remains exactly $0.7$. 
In general (see Figure \ref{fig:formula_p_n}), with small values for the input accuracy $p$, the function $\mathcal{T}(p)$ obtains a decrease of the original accuracy. On the other hand, in the case of a smaller overlap, for example the edge case of $\varrho = 0$, we have that $\widetilde{p}=p$, and Formula \ref{eq:final_mik} becomes equal to the original Formula \ref{eq:prob_exact}. Then, VORACE is able to exploit the increase of performance given by $n=10$ classifiers with a high $\widetilde{p}$ of $0.7$. In fact, Formula \ref{eq:final_mik} becomes simply $\mathcal{T}(0.7)$ that is close to $0.85 > 0.7$, improving the accuracy of the final model.
\end{example}

\section{Conclusions and Future Work}
\label{sec-con}


We have proposed the use of voting rules in the context of ensemble classifiers:
a voting rule aggregates the predictions of several randomly generated classifiers, with the goal to obtain a classification that is closer to the correct one.
Via a theoretical and experimental analysis, we have shown that this approach generates ensemble classifiers that perform similarly to, or even better than,
existing ensemble methods.
This is especially true when VORACE employs Plurality or Copeland as voting rules. In particular, Plurality has also the added advantage to 
require very little information from the individual classifiers and being tractable.
Compared to building ad-hoc classifiers that optimize the hyper-parameters configuration for a specific dataset, our approach does not require any knowledge of the domain and thus it is more broadly usable also by non-experts.

We plan to extend our work to deal with other types of data, such as structured data, text, or images. This will also allow for a direct comparison of our approach with the work by \cite{BergstraB12}.
Moreover, we are working on extending the theoretical analysis beyond the Plurality case. 

We also plan to consider the extension of our approach to multi-class 
classification. In this regard, a prominent application of voting theory to this scenario might come from the use of committee selection voting rules \cite{FSST17a} in an ensemble classifier. 
properties of voting rules that may be relevant and desired in the classification domain (see for instance \cite{Grandi2014FromSA,grandi2016borda}), with the aim to identify and select voting rules that possess such properties, or to define new voting rules with these properties, or also to prove impossibility results about the presence of one or more such properties.
We also plan to study 


%
%

\bibliographystyle{spbasic}      
\bibliography{bibliografia}   

%
%

\newpage
\appendix
\section{Discussion and comparison with \cite{Mu2009}.}\label{appendix:Mu2009}
In this section, we compare our theoretical formula to estimate the accuracy of VORACE in Eq. \ref{eq:prob_exact} (for the plurality case) with respect to the one provided in \citet{Mu2009} (page 93 Section 3.2, formula for $P_{id}$ Eq. 8), providing details of the problem of their formulation. From our analysis, we discovered that applying their estimation of the -- so called -- Identification Rate ($P_{id}$) produces incorrect results, even in simple cases. We can prove it by using the following counterexample: a binary classification problem where the goal is ``to combine'' a single classifier with accuracy $p$, i.e., number of classes $m=2$, and number of classifiers $n=1$. It is straightforward that the final accuracy of a combination of a single classifier with accuracy $p$ has to remain unchanged ($P_{id} = p$).

Before proceeding with the calculations, we have to introduce some quantities, following the same ones defined in their original paper:
\begin{itemize}
    \item $N_t$ is a random variable that gives the total number of votes received by the correct class:
    \[
        P(N_t=j) = \binom{n}{j} p^j (1-p)^{n-j}.
    \]
    \item $N_s$ is a random variable that gives the total number of votes received by the wrong class $s^{th}$:
    \[
        P(N_s=j) = \binom{n}{j} e^j (1-e)^{n-j},
    \]
    where $e = \frac{1 - p}{m - 1}$ is the misclassification rate.
    \item $N_s^{max}$ is a random variable that gives the maximum number of votes among all the wrong classes:
    \begin{align*}
        &P(N_s^{max} = k) = \\
        = &\sum_{h=1}^{m-1} \binom{m-1}{h} P(N_s = k)^h P(N_s < j)^{m-1-h},
    \end{align*}
    where the quantity $P(N_s < j)$ is:
    \[
        P(N_s < j) = \sum_{t=0}^{j-1} P(N_s=t).
    \]
\end{itemize}
The authors assume that $N_t$ and $N_s^{max}$ are independent random variables. 
This means that the probability that the correct class obtains $k$ votes is independent to the probability that the maximum votes within the wrong classes correspond to $j$. 
This false assumption leads to a wrong final formula. 
In fact, applying Eq. 8 in \cite{Mu2009} to our simple binary scenario with a single classifier, we have that the new estimated accuracy is:
\begin{align}
    P_{id} &= \sum_{j=1}^N P(N_t=j) \sum_{k=0}^{j-1} P(N_s^{max} = k) = \\
    &= P(N_t = 1) P(N_s^{max} = 0) = p^2, \nonumber
\end{align}
whereas the correct result should be $p$.

On the other hand, our proposed formula (Theorem \ref{prop:1}) tackles this scenario correctly, as proved in the following, where we specify Equation \ref{eq:prob_exact} to this context:
\begin{align*}
P_{id} &= \frac{1}{K}(1-p)^n \sum_{i=\lceil \frac{n}{m} \rceil}^{n} \varphi_i (n-i)!  \binom{n}{i} \left( \frac{p}{1-p} \right) ^i\\
&= \frac{1}{K} \varphi_1(0)! p =  p,
\end{align*}
where $\varphi_1(0)! = 1$ and $K = 1$.

Notice that, as expected, Formula \ref{eq:prob_exact} is equal to $1$ when $p=1$, meaning that, when all classifiers are correct, our ensemble method correctly outputs the same class as all individual classifiers.

As other proof of the difference between the two formulas, we created a similar plot as the one in Figure \ref{fig:formula_p_n}, applying  Eq. 8 in \cite{Mu2009} -- instead of our formula -- obtaining Figure \ref{fig:formula_p_n_nope}. The two plots are similar, with a  less steepness in the curves generated by using our formula. In this sense, we suppose that the formula proposed by \cite{Mu2009} is a good approximation of the correct value of $P_{id}$ for large values of $n$ (as we proved that for $n=1$ and $m=2$ is not correct).

\begin{figure}
\centering
\begin{tikzpicture}
\normalsize
\begin{axis}[legend style={at={(0.8,0.35)},
		anchor=north,legend columns=1}, xmin=0,xmax=1,ymin=0, ymax=1, xlabel=p (accuracy of the base classifiers), ylabel=Probability of choosing the correct class,xmajorgrids, ymajorgrids,every axis plot/.append style={thick}]
\addplot+[gray,mark=,solid,smooth] 
coordinates
{
(0,0)
(0.05,0.05)
(0.1,0.1)
(0.15,0.15)
(0.2,0.2)
(0.25,0.25)
(0.3,0.3)
(0.35,0.35)
(0.4,0.4)
(0.45,0.45)
(0.5,0.5)
(0.55,0.55)
(0.6,0.6)
(0.65,0.65)
(0.7,0.7)
(0.75,0.75)
(0.8,0.8)
(0.85,0.85)
(0.9,0.9)
(0.95,0.95)
(1,1)
};
\addplot+[red,mark=, solid,smooth] 
coordinates
{
(0.0,0.0)
(0.05,5.37960058398467e-10)
(0.1,7.088606331722193e-07)
(0.15000000000000002,3.86327482081078e-05)
(0.2,0.0005634136976601906)
(0.25,0.003942141664083465)
(0.30000000000000004,0.017144816431258456)
(0.35000000000000003,0.05316661436294621)
(0.4,0.12752124614721674)
(0.45,0.24928935982841194)
(0.5,0.41190147399902344)
(0.55,0.5913611846716277)
(0.6000000000000001,0.7553372033163934)
(0.65,0.878219413622599)
(0.7000000000000001,0.9520381026686567)
(0.75,0.9861355830562388)
(0.8,0.997405172599326)
(0.8500000000000001,0.9997516180103759)
(0.9,0.9999928490959789)
(0.9500000000000001,0.9999999886592819)
(1.0,1.0)
};
\addplot+[green,mark=,solid,smooth] 
coordinates
{
(0.0,0.0)
(0.05,3.7430554296596106e-39)
(0.1,6.323256842131429e-25)
(0.15000000000000002,3.942440434927658e-17)
(0.2,5.179892637524884e-12)
(0.25,2.131191608390269e-08)
(0.30000000000000004,9.03468619572068e-06)
(0.35000000000000003,0.0007378332488626113)
(0.4,0.016761686503161403)
(0.45,0.13457621318805263)
(0.5,0.46020538130641064)
(0.55,0.8172718153138552)
(0.6000000000000001,0.9729008022429914)
(0.65,0.9985494385234414)
(0.7000000000000001,0.9999779390866731)
(0.75,0.9999999336149751)
(0.8,0.9999999999786072)
(0.8500000000000001,0.9999999999999999)
(0.9,1.0)
(0.9500000000000001,0.9999999999999999)
(1.0,1.0)
};
\addplot+[blue,mark=,solid,smooth] 
coordinates
{
(0.0,0.0)
(0.05,2.3222802751611695e-75)
(0.1,2.964852156817343e-47)
(0.15000000000000002,6.783947104813922e-32)
(0.2,7.626779260241388e-22)
(0.25,8.778706966466265e-15)
(0.30000000000000004,1.0864028884363591e-09)
(0.35000000000000003,4.940555931357384e-06)
(0.4,0.0016847865199193523)
(0.45,0.06807524986274897)
(0.5,0.4718257604953717)
(0.55,0.9112993844383077)
(0.6000000000000001,0.9973645966438089)
(0.65,0.9999905406616545)
(0.7000000000000001,0.9999999974041742)
(0.75,0.9999999999999729)
(0.8,0.9999999999999997)
(0.8500000000000001,0.9999999999999998)
(0.9,1.0)
(0.9500000000000001,0.9999999999999999)
(1.0,1.0)
};
\legend{$p$, $n=10$, $n=100$, $n=400$}
\end{axis}
\end{tikzpicture}
\caption{Probability of choosing the correct class ($P_{id}$) varying the size of the profile $n$ in $\{10,50,100\}$ and keeping $m$ constant to 2, where each classifier has the same probability $p$ of classifying a given instance correctly, by using Eq. 8 in \cite{Mu2009}.}
\label{fig:formula_p_n_nope}
\end{figure}
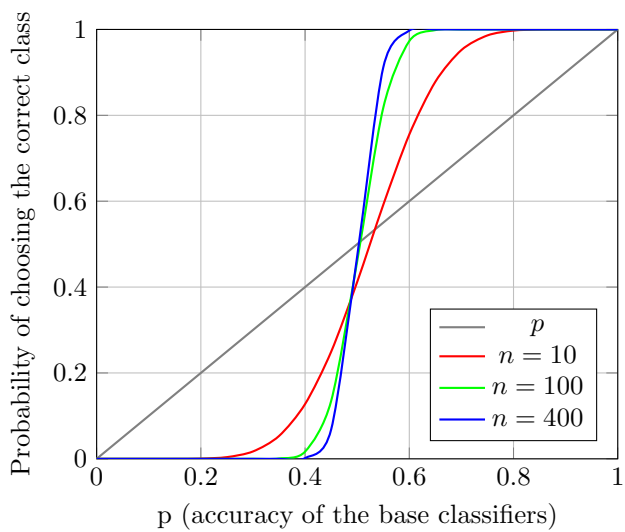

\end{document}